\def\eqref#1{equation~\ref{#1}}
\def\Eqref#1{Equation~\ref{#1}}
\def\1{\bm{1}}
\DeclareMathAlphabet{\mathsfit}{\encodingdefault}{\sfdefault}{m}{sl}
\SetMathAlphabet{\mathsfit}{bold}{\encodingdefault}{\sfdefault}{bx}{n}
\DeclareMathOperator*{\argmin}{arg\,min}
\definecolor{mycolor}{HTML}{3F918C} 
\definecolor{mycolor2}{HTML}{a62639}
\definecolor{mycolor3}{HTML}{784913}
\definecolor{mycolor4}{HTML}{45948f}
\newcommand{\indep}{\perp \!\!\! \perp}
\newcommand{\uni}[2]{\mathrm{Uni}({#1{:}#2})}
\newcommand{\rd}[2]{\mathrm{Red}({#1{:}#2})}
\newcommand{\syn}[2]{\mathrm{Syn}({#1{:}#2})}
\newcommand{\mut}[2]{\mathrm{I}({#1;#2})}
\newcommand{\mutd}[3]{\mathrm{I}_{#1}({#2;#3})}
\newcommand{\iid}[0]{i.i.d.}
\theoremstyle{plain}
\newtheorem{lemma}{Lemma}
\newtheorem{corollary}{Corollary}
\newtheorem{definition}{Definition}
\newtheorem{remark}{Remark}
\newtheorem{example}{Example}
\title{Demystifying Local \& Global Fairness \\ Trade-offs in Federated Learning Using \\ Partial Information Decomposition}
\author{Faisal Hamman \\
University of Maryland College Park \\
\texttt{fhamman@umd.edu} \\
\And
Sanghamitra Dutta \\
University of Maryland College Park\\
\texttt{sanghamd@umd.edu} \\
}
\begin{document}

\maketitle

\begin{abstract}

This work presents an information-theoretic perspective to group fairness trade-offs in federated learning (FL) with respect to sensitive attributes, such as gender, race, etc. Existing works often focus on either \emph{global fairness} (overall disparity of the model across all clients) or \emph{local fairness} (disparity of the model at each client), without always considering their trade-offs. There is a lack of understanding regarding the interplay between global and local fairness in FL, particularly under data heterogeneity, and if and when one implies the other. To address this gap, we leverage a body of work in information theory called partial information decomposition (PID), which first identifies three sources of unfairness in FL, namely, \emph{Unique Disparity}, \emph{Redundant  Disparity}, and \emph{Masked Disparity}. We demonstrate how these three disparities contribute to global and local fairness using canonical examples. This decomposition helps us derive fundamental limits on the trade-off between global and local fairness, highlighting where they agree or disagree.  We introduce the \emph{Accuracy and Global-Local Fairness Optimality Problem (AGLFOP)}, a convex optimization that defines the theoretical limits of accuracy and fairness trade-offs, identifying the best possible performance any FL strategy can attain given a dataset and client distribution. We also present experimental results on synthetic datasets and the ADULT dataset to support our theoretical findings.\footnote{Implementation is  available at \url{https://github.com/FaisalHamman/FairFL-PID}}
 
\end{abstract}

\section{Introduction}
\emph{Federated learning} (FL) is a framework where several parties (\textit{clients}) collectively train machine learning models while retaining the confidentiality of their local data~\citep{FL_textbook,kairouz2021advances}. With the growing use of FL in various high-stakes applications, such as finance, healthcare, recommendation systems, etc., it is crucial to ensure that these models do not discriminate against any demographic group based on sensitive features such as race, gender, age, nationality, etc. ~\citep{megansmith}. While there are several methods to achieve group fairness in the centralized settings~\citep{fairnessSurvey}, these methods do not directly apply to a FL setting since each client only has access to their local dataset, and hence, is restricted to only performing local disparity mitigation.

Recent works~\citep{du2021fairness,abay2020mitigating,ezzeldin2021fairfed} focus on finding models that are fair when evaluated on the entire dataset across all clients, a concept known as \textit{global fairness}. E.g., several banks may decide to engage in  FL to train a model that will determine loan qualifications without exchanging data among them. A globally fair model does not discriminate against any protected group when evaluated on the entire dataset across all the banks. On the other hand, \textit{local fairness} considers the disparity of the model at each client (when evaluated on a client's local dataset). Local fairness is important as the models are ultimately deployed and used locally~\citep{cui2021addressing}. 

Global and local fairness can differ, particularly when the local demographics at a client differ from the global demographic across the entire dataset (data heterogeneity, e.g., a bank with customers primarily from one race). Prior studies have mainly focused on either global or local fairness, without always considering their interplay. 
Global and local fairness align when data is \iid{} across clients~\citep{ezzeldin2021fairfed,cui2021addressing}, but their interplay in other scenarios is not well-understood. 

This work aims to provide a fundamental understanding of group fairness trade-offs in the FL setting. We first formalize the notions of Global and Local  Disparity in FL using information theory. Next, we leverage a body of work within information theory called partial information decomposition (PID) to further identify three sources of disparity in FL that contribute to the Global and Local  Disparity, namely, \emph{Unique Disparity}, \emph{Redundant Disparity}, and \emph{Masked Disparity}. 
This information-theoretic decomposition is significant because it helps us derive fundamental limits on the trade-offs between Global and Local  Disparity, particularly under data heterogeneity, and provides insights on when they agree and disagree. We introduce the \emph{Accuracy and Global-Local Fairness Optimality Problem} (AGLFOP), a novel convex optimization that rigorously examines the trade-offs between accuracy and both global and local fairness. This framework establishes the theoretical limits of what any FL technique can achieve in terms of accuracy and fairness \emph{given a dataset and client distribution.} This work provides a more nuanced understanding of the interplay between these two fairness notions that can better inform disparity mitigation techniques and their convergence and effectiveness in practice.

Our main contributions can be summarized as follows:
\begin{itemize}[topsep=0pt, itemsep=0pt,leftmargin=*]
     \item \textbf{Partial information decomposition of Global and Local  Disparity:} We first define Global Disparity as the mutual information $\mut{Z}{\hat{Y}}$ where $\hat{Y}$ is a model's prediction and $Z$ is the sensitive attribute   (see Definition~\ref{def:glbDis}). Then, we show that Local  Disparity can in fact be represented as the conditional mutual information $\mut{Z}{\hat{Y}|S}$ where $S$ denotes the client (see Definition~\ref{def:locDis}). We also demonstrate relationships between these information-theoretic quantifications and well-known fairness metrics such as statistical parity (see Lemma~\ref{lem:SP_Mut}). Using an information-theoretic quantification then enables us to further decompose the Global and Local  Disparity into three non-negative components: \textit{Unique}, \textit{Redundant}, and \textit{Masked Disparity}. We provide canonical examples to help understand these disparities in the context of FL (see Section \ref{sec: PID_Fed}). The significance of our information-theoretic decomposition lies in separating the regions of agreement and disagreement of Local and Global Disparity, demystifying their trade-offs.

    \item \textbf{Fundamental limits on trade-offs between local and
global fairness:}  
We show the limitations of achieving global fairness using local disparity mitigation techniques due to \emph{Redundant Disparity} (see Theorem~\ref{thm: imp}) and the limitations of achieving local fairness even if global fairness is achieved due to Masked Disparity
(see Theorem~\ref{Thm:Globnotloc}).
We also identify the necessary and sufficient conditions under which one form of fairness (local or global) implies the other (see Theorem \ref{Thm: NesSuf} and  \ref{thm:Globtoloc}), as well as, discuss other conditions that are sufficient but not necessary.

\item \textbf{A convex optimization framework for quantifying accuracy-fairness trade-offs:} We present the \emph{Accuracy and Global-Local Fairness Optimality Problem} (AGLFOP) (see Definition~\ref{def:AGLFOP}), a novel convex optimization framework for systematically exploring the trade-offs between accuracy and both global and local fairness metrics. AGLFOP evaluates all potential joint distributions, thereby setting the theoretical boundaries for the best possible performance achievable for a given dataset and client distribution in FL.

\item \textbf{Experimental demonstration:} We validate our theoretical findings using synthetic and Adult dataset~\citep{Adult}. We study the trade-offs between accuracy and global-local fairness by examining the Pareto frontiers of the AGLFOP. We investigate the PID of disparities in the Adult dataset trained within a FL setting with multiple clients under various data heterogeneity scenarios.

\end{itemize}

\textbf{Related Works.} There are various perspectives to fairness in FL~\citep{shi2021survey}. One definition is \textit{client-fairness}~\citep{li2019fair}, which aims to achieve equal performance across all client devices~\citep{wang2023fedeba}. In this work, we are instead interested in group fairness, i.e., fairness with respect to demographic groups based on gender, race, etc. 
Methods for achieving group fairness in a centralized  setting~\citep{agarwal2018reductions,hardt2016equality,dwork2012fairness,kamishima2011fairness,newFairSurvey} may not directly apply in a FL setting since each client only has access to their local dataset. Existing works on group fairness in FL generally aim to develop models that achieve \textit{global fairness}, without much consideration for the \textit{local fairness} at each client~\citep{ezzeldin2021fairfed}. For instance, one approach to achieve global fairness in FL poses a constrained optimization problem to find the best model locally, while also ensuring that disparity at a client does not exceed a threshold and then aggregates those models
~\citep{chu2021fedfair,rodriguez2021enforcing,fairfl}. Other techniques involve bi-level optimization that aims to find the optimal global model (minimum loss) under the worst-case fairness violation ~\citep{papadaki2022minimax,hu2022provably,MadisonWis}, or re-weighting mechanisms~\citep{abay2020mitigating,du2021fairness}, both of which often require sharing additional parameters with a server. \citet{cui2021addressing} argues for local fairness, as the model will be deployed at the local client level, and propose constrained multi-objective optimization. While accuracy-fairness tradeoffs have been examined in centralized settings~\citep{chen2018my,dutta2020chernoff,kim2020fact,zhao2022inherent,liu2022accuracy, wang2023aleatoric, kim2020fact,sabato2020bounding,menon2017cost,venkatesh2021can}, such considerations, along with the relationship between local and global fairness, remain largely unexplored in FL. Our work addresses this gap by examining them through the lens of PID.

Information-theoretic measures have been used to quantify group fairness in the centralized setting in \citet{kamishima2011fairness,calmon2017optimized,ghassami2018fairness,dutta2020information,dutta2021fairness,cho2020fair,baharlouei2019r,grari2019fairness,wang2021split,galhotra2022causal,NEURIPS2022_fd5013ea,kairouz2019generating,dutta2023review}. PID is also generating interest in other ML problems~\citep{ehrlich2022partial,tax2017partial,liang2024quantifying,wollstadt2023rigorous,mohamadi2023more,pakman2021estimating,venkatesh2022partial}. Here, instead of trying to minimize information-theoretic measures as a regularizer, our goal is to quantify the fundamental trade-offs between local and global fairness in FL and develop insights on their interplay to better understand what is information-theoretically possible using any technique.

\section{Preliminaries}\label{prel}

\textbf{Notations.} A client is represented as $S \in \{0,1,\ldots,K{-}1\}$, where $K$ is the total number of federating clients. A client $S{=}s$ has a dataset $\mathcal{D}_s = \{(x_i,y_i,z_i)\}_{i=1, \ldots n_s}$, where $x_i$ denotes the input features, $y_i \in \{0,1\}$ is the true label, and $z_i\in \{0,1\}$ is the sensitive attribute (assume binary) with $1$ indicating the privileged group and $0$ indicating the unprivileged group. The term $n_s$ denotes the number of datapoints at client $S{=}s$. The  collective dataset is given by $\mathcal{D}= \cup_{s=0}^{K-1} \mathcal{D}_s$. 
When denoting a random variable drawn from this dataset, we let $X$ be the input features, $Z$ be the sensitive attribute, and $Y$ be the true label. We also let $\hat{Y}$ represent the predictions of a model $f_\theta(X)$ which is parameterized by $\theta$.

Standard FL aims to minimize the empirical risk:  
$
    \min_\theta L(\theta) = \min_\theta \frac{1}{K} \sum_{s=0}^{K-1} \alpha_s L_s(\theta), 
$ where $L_s(\theta)= \frac{1}{n_s} \sum_{(x,y)\in \mathcal{D}_s} l(f_\theta(x),y)$ is the local objective (or loss) at client $s$, $\alpha_s$ is an importance coefficient (often equal across clients), and $l(\cdot,\cdot)$ denotes a predefined loss function. To minimize the objective $L(\theta)$, a decentralized approach is employed. Each client $S=s$ trains on their private dataset $\mathcal{D}_s$ and provides their trained local model to a centralized server. The server aggregates the parameters of the local models to create a global model $f_\theta(x)$~\citep{FEDAGR}. E.g., the \emph{FedAvg} algorithm~\citep{Fedavg} is a popular approach that aggregates the parameters of local models by taking their average, which is then used to update the global model. This process is repeated until the global model achieves a satisfactory \emph{performance} level.

\textbf{Background on Partial Information Decomposition.} PID  decomposes the mutual information $\mathrm{I}(Z; A, B)$ about a random variable Z contained in the tuple $(A, B)$ into four non-negative terms:
\begin{align}
\mathrm{I}(Z; A, B) & = \uni{Z}{A|B} +\uni{Z}{B|A} +\rd{Z}{A,B} + \syn{Z}{A,B} \label{eq:pid}
\end{align}
Here, $\uni{Z}{A|B}$ denotes the unique information about $Z$ that is present only in $A$ and not in $B$. E.g., \emph{shopping preferences} ($A$) may provide unique information about \emph{gender} ($Z$) that is not present in \emph{address} ($B$). $\text{Red}(Z{:} A, B)$ denotes the redundant information about $Z$ that is present in both $A$ and $B$. E.g., \emph{zipcode} ($A$) and \emph{county} ($B$) may provide redundant information about \emph{race}. $\syn{Z}{A,B}$ denotes the synergistic information not present
in either $A$ or $B$ individually, but present jointly in $(A, B)$, e.g., each individual digit of the \emph{zipcode} may not have information about \emph{race} but together they provide significant information about \emph{race}.

\begin{figure}
\centering
\includegraphics[width=3.7cm]{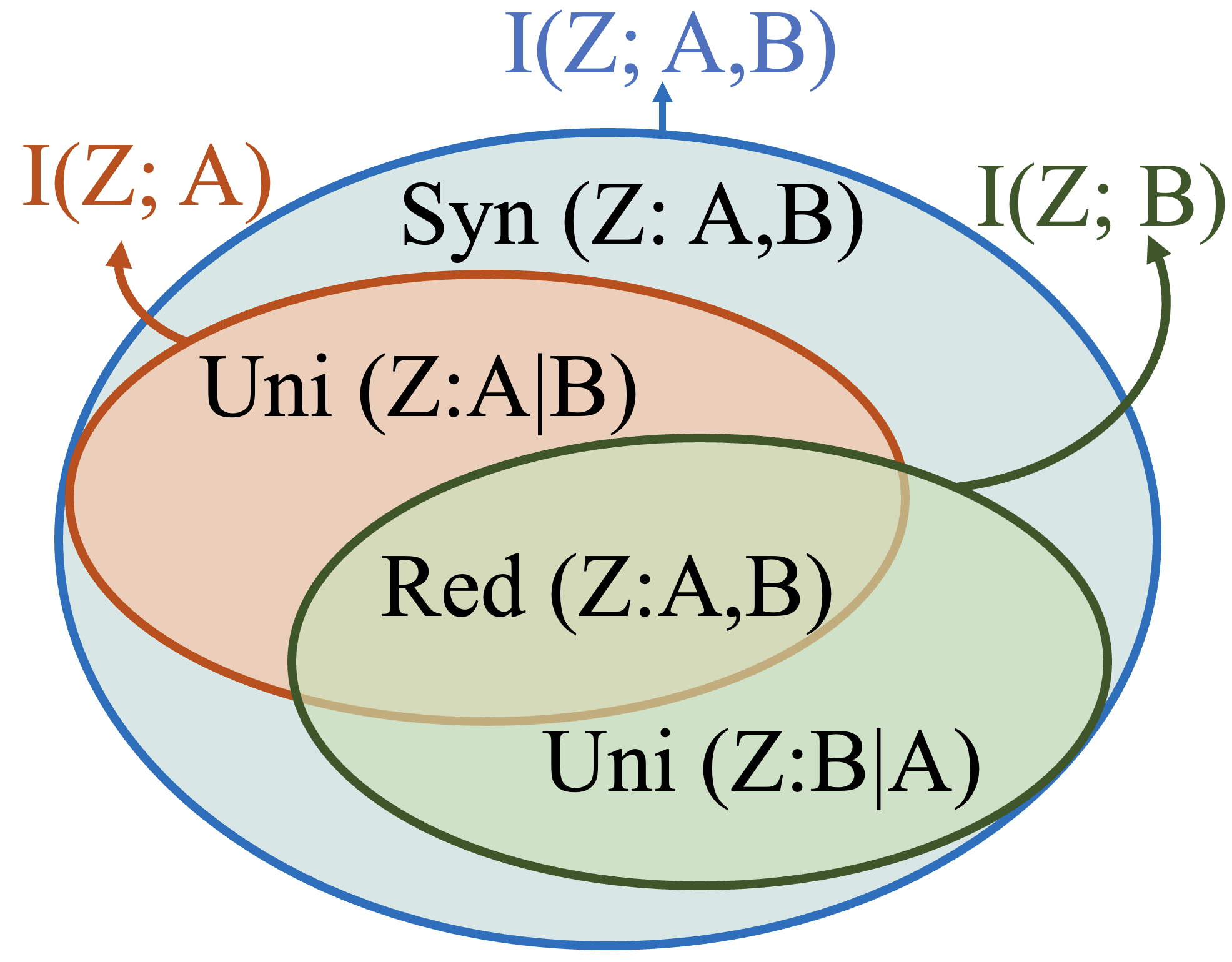}
\caption{Venn diagram showing PID of 
 mutual information $\mut{Z}{A, B}.$}
\label{volume}
\end{figure}

\textbf{\emph{Numerical Example.}} Let $Z=(Z_1,Z_2,Z_3)$ with each $Z_i{\sim}$ \iid{} Bern(\nicefrac{1}{2}). Let $A=(Z_1,Z_2,Z_3\oplus N)$, $B=(Z_2,N)$, $N\sim $  Bern(\nicefrac{1}{2}) is independent of $Z$. Here,  $\mathrm{I}(Z;A,B)=3$ bits. The unique information about $Z$ that is contained only in $A$ and not in $B$ is effectively in $Z_1$, and is given by $\uni{Z}{A| B} {=} \mut{Z}{Z_1} = 1$ bit. The redundant information about $Z$ that is contained in both $A$ and $B$ is effectively in $Z_2$ and is given by $\rd{Z}{A, B}=\mathrm{I}(Z;Z_2)=1$ bit. Lastly, the synergistic information about $Z$ that is not contained in either $A$ or $B$ alone, but is contained in both of them together is effectively in the tuple $(Z_3\oplus N,N)$, and is given by $\syn{Z}{A,B} = \mut{Z}{(Z_3\oplus N,N)}=1 $ bit. This accounts for the $3$ bits in $\mut{Z}{A,B}$. Here, we include a popular definition of $\uni{Z}{A|B}$ from~\cite{bertschinger_QUI}.

\begin{definition}[Unique Information]\label{def:bert_def} Let $\Delta$ be the set of all joint distributions on $(Z, A, B)$ and $\Delta_p$ be the set of joint distributions with the same marginals on $(Z, A)$ and $(Z, B)$ as the true distribution, i.e., $\Delta_p=\{Q {\in} \Delta$ : $\Pr_Q(Z{=}z, A{=}a){=}\Pr(Z{=}z, A{=}a)$ and $\Pr_Q(Z{=}z,B{=}b)=\operatorname{Pr}(Z{=}z, B{=}b)\}$. Then, $\uni{Z}{A|B}=\min _{Q \in \Delta_p} \mathrm{I}_Q(Z ; A | B)$, where $\mathrm{I}_Q(Z ; A | B)$ is the conditional mutual information when $(Z, A, B)$ have joint distribution $Q$ and $\Pr_Q(\cdot)$ denotes the probability under $Q$. 
\end{definition}

Defining any one of the PID terms suffices to obtain the others. $\rd{Z}{A, B}$ is the sub-volume between $\mut{Z}{A}$ and $\mut{Z}{B}$ (see Fig.~\ref{volume}). Hence,  $\rd{Z}{A, B} = \mut{Z}{A}- \uni{Z}{A|B}$ and $\syn{Z}{A,B} =  \mathrm{I}(Z; A, B) - \uni{Z}{A|B} -\uni{Z}{B|A}-\rd{Z}{A, B}$ (from \Eqref{eq:pid}).

\section{Main Results}\label{main}

We first formalize the notions of Global and Local Disparity in FL using information theory.
\begin{definition}[Global Disparity] \label{def:glbDis}
The Global Disparity of a model $f_\theta$ with respect to $Z$ is defined as $\mut{Z}{\hat{Y}}$, the mutual information between $Z$ and $\hat{Y}$ (where $\hat{Y}=f_\theta(X)$).
\end{definition}
This is related to a widely-used group fairness notion called statistical parity. Existing works~\citep{agarwal2018reductions} define the Global Statistical Parity as: $\Pr(\hat{Y}=1|Z=1)=\Pr(\hat{Y}=1|Z=0)$.
Global Statistical Parity is satisfied when $Z$ is independent of $\hat{Y}$, which is equivalent to zero mutual information $\mut{Z}{\hat{Y}}=0$. To further justify our choice of $\mut{Z}{\hat{Y}}$ as a measure of Global Disparity, we provide a relationship between the absolute statistical parity gap and mutual information when they are non-zero in Lemma \ref{lem:SP_Mut} (Proof in Appendix~\ref{APx:prem}).

\begin{restatable}[Relationship between Global Statistical Parity Gap and $\mut{Z}{\hat{Y}}$]{lemma}{SPMI}\label{lem:SP_Mut} Let $ \Pr(Z{=}0) =\alpha$. The gap  $SP_{global}=|\Pr(\hat{Y}=1|Z=1)-\Pr(\hat{Y}=1|Z=0)|$ is bounded by $\frac{\sqrt{0.5 \; \mut{Z}{\hat{Y}}}}{2 \alpha (1-\alpha)}$.
\end{restatable}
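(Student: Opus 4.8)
The plan is to bound the statistical parity gap by relating it to a KL divergence, and then invoke Pinsker's inequality in reverse, i.e., use the fact that mutual information $\mut{Z}{\hat Y}$ is a sum of KL divergences that dominates (up to constants) the squared total variation distance between the conditional laws of $\hat Y$ given $Z$.

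First I would write $\mut{Z}{\hat Y} = \sum_{z} \Pr(Z{=}z)\, \KL\!\left(\Pr(\hat Y \mid Z{=}z)\,\big\|\,\Pr(\hat Y)\right)$, the standard ``information radius'' identity. Each KL term is at least $2\,\|\Pr(\hat Y\mid Z{=}z) - \Pr(\hat Y)\|_{TV}^2$ by Pinsker's inequality, where the total variation distance here (since $\hat Y$ is binary-valued, or at least since we only care about the event $\hat Y{=}1$) is $|\Pr(\hat Y{=}1\mid Z{=}z) - \Pr(\hat Y{=}1)|$. So I get
\[
\mut{Z}{\hat Y} \;\ge\; 2\sum_z \Pr(Z{=}z)\,\bigl(\Pr(\hat Y{=}1\mid Z{=}z) - \Pr(\hat Y{=}1)\bigr)^2 .
\]
With $\Pr(Z{=}0)=\alpha$, write $p_0 = \Pr(\hat Y{=}1\mid Z{=}0)$, $p_1 = \Pr(\hat Y{=}1\mid Z{=}1)$, so $\Pr(\hat Y{=}1) = \alpha p_0 + (1-\alpha) p_1$. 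Then $p_0 - \Pr(\hat Y{=}1) = (1-\alpha)(p_0 - p_1)$ and $p_1 - \Pr(\hat Y{=}1) = -\alpha(p_0-p_1)$. Plugging in, the right-hand side becomes $2\bigl(\alpha(1-\alpha)^2 + (1-\alpha)\alpha^2\bigr)(p_0-p_1)^2 = 2\alpha(1-\alpha)(p_0-p_1)^2$. Hence $\mut{Z}{\hat Y} \ge 2\alpha(1-\alpha)\, SP_{global}^2$, which rearranges to $SP_{global} \le \sqrt{\mut{Z}{\hat Y}/(2\alpha(1-\alpha))}$.

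There is a small gap between this clean bound and the stated one, $\frac{\sqrt{0.5\,\mut{Z}{\hat Y}}}{2\alpha(1-\alpha)}$: my argument gives $\sqrt{0.5\,\mut{Z}{\hat Y}}\big/\sqrt{\alpha(1-\alpha)}$, and since $\sqrt{\alpha(1-\alpha)} \ge 2\alpha(1-\alpha)$ for $\alpha\in[0,1]$ (because $\alpha(1-\alpha)\le 1/4$), the stated bound is in fact weaker, so it follows a fortiori. So I would either present the sharper constant or, to match the paper verbatim, note this last monotonicity step. The main obstacle — really the only nontrivial choice — is deciding which ``reverse Pinsker''-style inequality to route through; a cleaner alternative avoiding Pinsker is to apply the log-sum inequality or a direct convexity argument to $\mut{Z}{\hat Y}$ viewed as a function of $(p_0,p_1)$ via binary KL, $\KL(p\|q)\ge 2(p-q)^2$, which is exactly Pinsker specialized to Bernoulli and is the most elementary self-contained route; I would take that one. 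Everything else is bookkeeping with $\alpha$ and the convex combination.
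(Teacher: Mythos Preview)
Your argument is correct and in fact yields a sharper constant than the paper. The paper's proof applies Pinsker's inequality \emph{once}, directly to the joint: it writes $\mut{Z}{\hat Y}=\KL\!\left(\Pr(\hat Y,Z)\,\|\,\Pr(\hat Y)\Pr(Z)\right)$, bounds the total variation of the joint by $\sqrt{0.5\,\mut{Z}{\hat Y}}$, and then computes that total variation explicitly as $2\alpha(1-\alpha)\,|SP_{global}|$, giving the stated bound. You instead decompose $\mut{Z}{\hat Y}$ as the $\Pr(Z)$-weighted sum of $\KL\!\left(\Pr(\hat Y\mid Z{=}z)\,\|\,\Pr(\hat Y)\right)$ and apply Pinsker (equivalently the binary-KL quadratic lower bound) termwise before summing. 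Because squaring then averaging dominates averaging then squaring, your route gives $SP_{global}\le \sqrt{0.5\,\mut{Z}{\hat Y}}/\sqrt{\alpha(1-\alpha)}$, which is strictly tighter than the paper's $\sqrt{0.5\,\mut{Z}{\hat Y}}/(2\alpha(1-\alpha))$ whenever $\alpha\neq 1/2$; your final monotonicity step $\sqrt{\alpha(1-\alpha)}\ge 2\alpha(1-\alpha)$ correctly recovers the paper's weaker form. So the difference is where Pinsker is invoked (joint vs.\ conditionals); the paper's route is a bit shorter bookkeeping-wise, while yours is more elementary (only the Bernoulli case of Pinsker is needed) and buys the improved dependence on $\alpha$.
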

A critical observation that we make in this work is that: \emph{local unfairness can be quantified as $\mut{Z}{\hat{Y}|S}$, the conditional mutual information between $Z$ and $\hat{Y}$ conditioned on $S$. This is motivated from \cite{ezzeldin2021fairfed} which defines Local Statistical Parity at a client $s$ as: $\Pr(\hat{Y}{=}1|Z{=}1,S{=}s)=\Pr(\hat{Y}{=}1|Z{=}0,S{=}s).$ }
\begin{definition}[Local Disparity] \label{def:locDis}
The Local  Disparity is the conditional mutual information $\mut{Z}{\hat{Y}|S}$.
\end{definition}
\begin{restatable}{lemma}{locsp}\label{lem:muteqsp_s}$\mut{Z}{\hat{Y}|S}{=}0$ if and only if $\Pr(\hat{Y}{=}1|Z{=}1,S{=}s){=}\Pr(\hat{Y}{=}1|Z{=}0,S{=}s)$ at all clients s.
\end{restatable}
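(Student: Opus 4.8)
The plan is to reduce the claim to the standard characterization of (conditional) mutual information being zero, using the chain-rule decomposition of $\mut{Z}{\hat Y|S}$ over clients. First I would write
\[
\mut{Z}{\hat Y|S} \;=\; \sum_{s} \Pr(S=s)\,\mut{Z}{\hat Y \mid S=s},
\]
where $\mut{Z}{\hat Y \mid S=s}$ is the mutual information computed under the conditional law $\Pr(\cdot \mid S=s)$. Since each summand is nonnegative and each $\Pr(S=s)>0$ (clients with no data are excluded), the left-hand side is zero if and only if $\mut{Z}{\hat Y \mid S=s}=0$ for every client $s$.

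Next I would invoke the elementary fact that mutual information (here, under the conditional distribution given $S=s$) vanishes exactly when the two variables are independent: $\mut{Z}{\hat Y \mid S=s}=0 \iff Z \indep \hat Y \mid S=s$. The remaining step is to translate this conditional independence into the stated equality of probabilities, using that $Z$ and $\hat Y$ are binary. For fixed $s$, independence of $Z$ and $\hat Y$ given $S=s$ is equivalent to $\Pr(\hat Y=1 \mid Z=z, S=s)$ being the same for $z=0$ and $z=1$ (both then equal to $\Pr(\hat Y=1\mid S=s)$); the analogous statement for $\hat Y=0$ follows by complementation, and the joint pmf factorizes accordingly. I would state this equivalence cleanly for a pair of binary random variables and apply it conditionally on $S=s$ for each $s$.

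The only points needing a little care — and the closest thing to an obstacle — are boundary/degeneracy cases: if, for some client $s$, $Z$ is (conditionally) deterministic, i.e. $\Pr(Z=1\mid S=s)\in\{0,1\}$, then one of the conditional probabilities in the statement is defined by a zero-probability event; in that case $\mut{Z}{\hat Y\mid S=s}=0$ automatically and the equality should be read as vacuously satisfied (or one restricts attention to $z$ with $\Pr(Z=z\mid S=s)>0$). I would note this convention explicitly so the "if and only if" is literally correct. Everything else is a direct application of nonnegativity of mutual information and the binary-variable independence characterization; no heavy computation is involved.
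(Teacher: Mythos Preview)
Your proposal is correct and follows essentially the same route as the paper: decompose $\mut{Z}{\hat Y|S}=\sum_s \Pr(S{=}s)\,\mut{Z}{\hat Y|S{=}s}$, use nonnegativity to reduce to each client, and translate $\mut{Z}{\hat Y|S{=}s}=0$ into the conditional independence $p(\hat y|z,s)=p(\hat y|s)$, which for binary $\hat Y$ is exactly the stated equality. Your explicit handling of the degenerate case $\Pr(Z{=}z\mid S{=}s)=0$ is a nice addition that the paper's proof leaves implicit.
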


The proof (see Appendix~\ref{APx:prem}) uses the fact that $\mut{Z}{\hat{Y}|S}= \sum_{s=0}^{K-1} \Pr(S{=}s)\mut{Z}{\hat{Y}|S=s}
$
where $\mut{Z}{\hat{Y}|S=s}$ is the Local  Disparity at client $s$, and $\Pr(S{=}s)= n_s/n$, the proportion of data points at client $s$. Similar to Lemma \ref{lem:SP_Mut}, we can also get a relationship between $SP_{s}$ and $\mut{Z}{\hat{Y}|S=s}$ when they are non-zero (see Corollary \ref{col:SPk_MI} in Appendix~\ref{APx:prem}). We can also define other fairness metrics similarly. For instance, \emph{Global Equalized Odds} can be formulated in terms of the conditional mutual information, denoted as \(\mut{Z}{\hat{Y}|Y} \) and \emph{Local Equalized Odds} as \( \mut{Z}{\hat{Y}|Y, S} \). 

\subsection{Partial Information Decomposition of Global and Local Disparity} \label{sec: PID_Fed}
We provide a decomposition of Global and Local Disparity into three sources of unfairness: Unique, Redundant, and Masked Disparity, and provide examples to illustrate and better understand these disparities in the context of FL.

\begin{restatable}
{proposition}{disparitydecomp}\label{prop:decomposition}The Global and Local Disparity in FL can be decomposed into non-negative terms:
\begin{align}
&\mut{Z}{\hat{Y}}=\uni{Z}{\hat{Y}|S} + \rd{Z}{\hat{Y}, S}. \label{eqn:gblPID}\\
&\mut{Z}{\hat{Y}|S}=\uni{Z}{\hat{Y}|S} + \syn{Z}{\hat{Y},S}.  \label{eq:locPID}
\end{align} 
\end{restatable}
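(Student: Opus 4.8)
The plan is to apply the four-way PID identity~\eqref{eq:pid} with the specific choice $A = \hat{Y}$ and $B = S$, and then read off the two claimed equalities using only the chain rule for mutual information together with the consistency (``sub-volume'') constraints already recorded in the preliminaries. Concretely, \eqref{eq:pid} instantiated at $(Z;\hat{Y},S)$ reads
\[
\mut{Z}{\hat{Y}, S} = \uni{Z}{\hat{Y}|S} + \uni{Z}{S|\hat{Y}} + \rd{Z}{\hat{Y}, S} + \syn{Z}{\hat{Y}, S},
\]
and the same bookkeeping that gives $\rd{Z}{A,B} = \mut{Z}{A} - \uni{Z}{A|B}$ (the sub-volume identity illustrated in Fig.~\ref{volume}) gives, with $A=\hat{Y}$ and $B=S$, exactly
\[
\mut{Z}{\hat{Y}} = \uni{Z}{\hat{Y}|S} + \rd{Z}{\hat{Y}, S},
\]
which is~\eqref{eqn:gblPID}. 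By the symmetry of redundancy in its two arguments, the companion identity $\mut{Z}{S} = \uni{Z}{S|\hat{Y}} + \rd{Z}{\hat{Y}, S}$ also holds, and I will use it in the next step.

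For~\eqref{eq:locPID} I would invoke the chain rule in the form $\mut{Z}{\hat{Y}, S} = \mut{Z}{S} + \mut{Z}{\hat{Y}|S}$, so that $\mut{Z}{\hat{Y}|S} = \mut{Z}{\hat{Y}, S} - \mut{Z}{S}$. Substituting the four-term expansion of $\mut{Z}{\hat{Y}, S}$ above and the expression $\mut{Z}{S} = \uni{Z}{S|\hat{Y}} + \rd{Z}{\hat{Y}, S}$ just derived, the terms $\uni{Z}{S|\hat{Y}}$ and $\rd{Z}{\hat{Y}, S}$ cancel, leaving $\mut{Z}{\hat{Y}|S} = \uni{Z}{\hat{Y}|S} + \syn{Z}{\hat{Y}, S}$, which is~\eqref{eq:locPID}.

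Finally, non-negativity of every term appearing on the right-hand sides is inherited from the defining property of PID: each of $\uni{Z}{\hat{Y}|S}$, $\rd{Z}{\hat{Y}, S}$, and $\syn{Z}{\hat{Y}, S}$ is non-negative (for the operationalization in Definition~\ref{def:bert_def}, $\uni{Z}{\hat{Y}|S} = \min_{Q\in\Delta_p}\mathrm{I}_Q(Z;\hat{Y}|S)\ge 0$ since conditional mutual information is non-negative, and the corresponding non-negativity of $\rd{Z}{\hat{Y}, S}$ and $\syn{Z}{\hat{Y}, S}$ is part of the standard package of results accompanying that definition). I do not expect a genuine obstacle here; the only points that require care are (i) keeping $\hat{Y}$ fixed in the first argument slot throughout, so that the $\uni{Z}{S|\hat{Y}}$ and $\rd{Z}{\hat{Y}, S}$ terms cancel cleanly in the second identity, and (ii) applying the chain rule in the order that exposes $\mut{Z}{S}$ rather than $\mut{Z}{\hat{Y}}$ as the quantity being subtracted.
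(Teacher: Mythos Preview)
Your proposal is correct and follows essentially the same route as the paper: both obtain \eqref{eqn:gblPID} directly from the sub-volume identity $\rd{Z}{\hat{Y},S}=\mut{Z}{\hat{Y}}-\uni{Z}{\hat{Y}|S}$, and both obtain \eqref{eq:locPID} by combining the four-term PID expansion of $\mut{Z}{\hat{Y},S}$ with the chain rule $\mut{Z}{\hat{Y},S}=\mut{Z}{S}+\mut{Z}{\hat{Y}|S}$ and the companion identity $\mut{Z}{S}=\uni{Z}{S|\hat{Y}}+\rd{Z}{\hat{Y},S}$. The only notable difference is that where you defer non-negativity of $\rd{Z}{\hat{Y},S}$ to the ``standard package,'' the paper proves it explicitly by constructing $Q_0\in\Delta_p$ with $\hat{Y}\indep S\mid Z$ and showing $\rd{Z}{\hat{Y},S}\ge \mathrm{I}_{Q_0}(\hat{Y};S)\ge 0$.
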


We refer to Fig.~\ref{fig:decomposition} for an illustration of this result. \Eqref{eqn:gblPID} follows from the relationship between different PID terms while \Eqref{eq:locPID} requires the chain rule of mutual information~\citep{cover1999elements}. For completeness, we show the non-negativity of PID terms in Appendix \ref{apx:PID}. 

The term $\uni{Z}{\hat{Y}|S}$  quantifies the unique information the sensitive attribute $Z$ provides about the model prediction $\hat{Y}$ that is not provided by client label $S$. We refer to this as the \emph{\textbf{Unique Disparity}}. The Unique Disparity contributes to both Local and Global Disparity, highlighting the region where they agree. The \emph{\textbf{Redundant  Disparity}}, $\rd{Z}{\hat{Y}, S}$, quantifies the information about sensitive attribute $Z$ that is common between prediction $\hat{Y}$ and client $S$. The Unique and Redundant Disparities together make up the Global Disparity. $\syn{Z}{\hat{Y},S}$ represents the synergistic information about sensitive attribute $Z$ that is \emph{not} present in either $\hat{Y}$ or $S$ individually, but is present jointly in $(\hat{Y}, S)$. We refer to this as the \emph{\textbf{Masked Disparity}}, as it is only observed when $\hat{Y}$ and $S$ are considered together. Redundant and Masked Disparities cause disagreement between global and local fairness.

\begin{figure}
\centering
\includegraphics[width=0.4\textwidth]{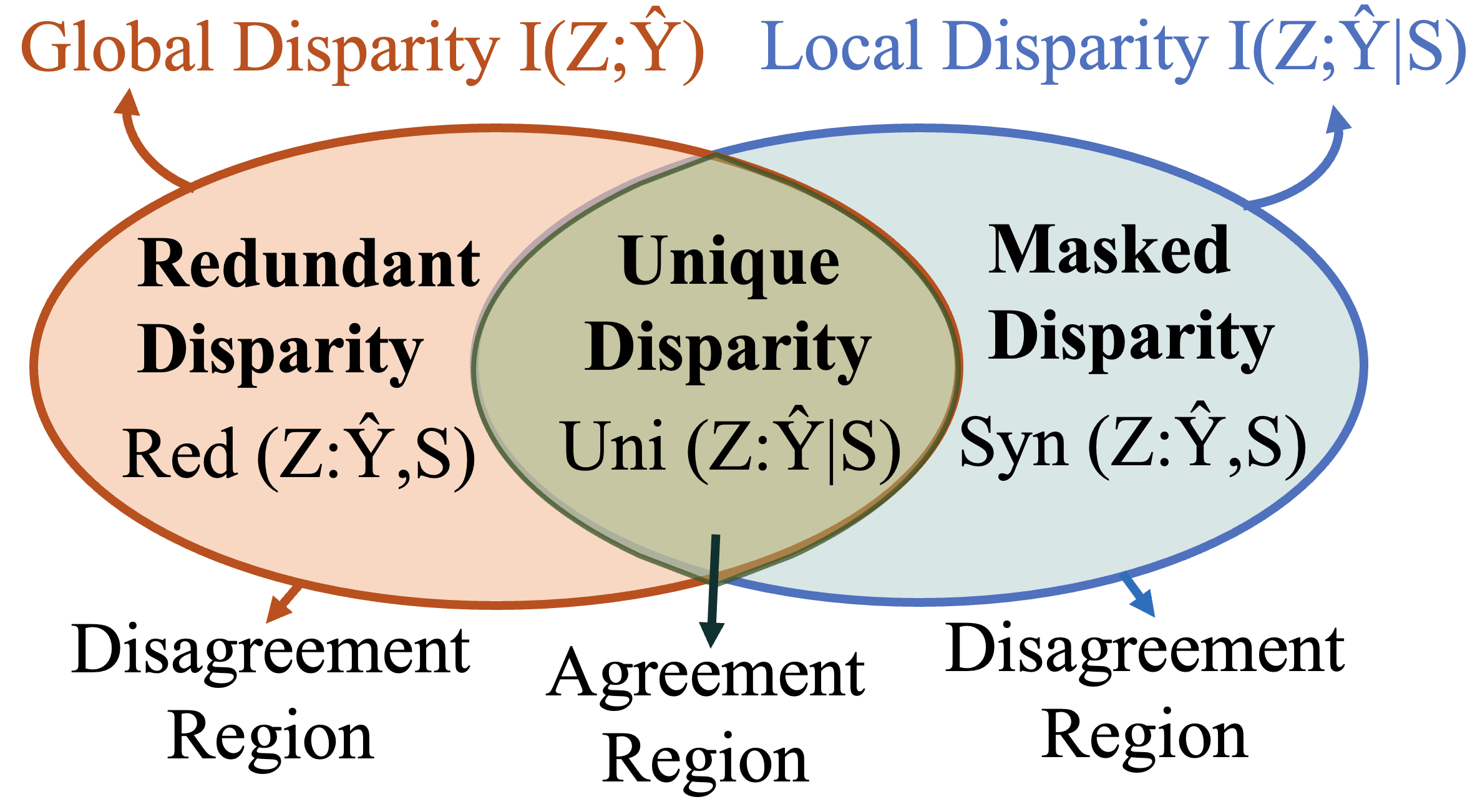}
\caption{Venn diagram of PID for Global \& Local Disp. with agreement and disagreement regions.}
\label{fig:decomposition}
\end{figure}

\textbf{Canonical Examples.} We now examine a loan approval scenario featuring binary-sensitive attributes across two clients, i.e., $\hat{Y},Z,S \in \{0,1\}$. Here, $\mut{Z}{\hat{Y},S} {=} H(Z) - H(Z|\hat{Y},S) {\leq} H(Z){=}1$, i.e., the maximum disparity is $1$ bit.

\begin{example}[Pure Uniqueness]
Let $\hat{Y}=Z$ and $Z \indep S$. The men ($Z=1$) and women ($Z=0$) are identically distributed across the clients. Suppose, the model only approves men but rejects women for a loan across both clients.  This model is both locally and globally unfair, $\mut{Z}{\hat{Y}}=\mut{Z}{\hat{Y}|S}=1$. This is a case of purely Unique Disparity since all the information about gender is derived exclusively from the model predictions; the client $S$ has no correlation with gender $Z$. Both Global and Local Disparities are in agreement. Here, $\uni{Z}{\hat{Y}|S}=1$, $\rd{Z}{\hat{Y},S}=0$, and $\syn{Z}{\hat{Y},S}=0$.
\end{example}

\begin{example}[Pure Redundancy]\label{exp:pureRED}
The client $S=0$ has 90\% women, while client $S=1$ has 90\% men. So, there is a correlation between the clients and gender. Suppose, the model approves everyone from client $S=1$ while rejecting everyone in $S=0$ (i.e., $\hat{Y}=S$). Such a model is locally fair because men and women are treated equally within a particular client, and $\mut{Z}{\hat{Y}|S}=0$. However, the model is globally unfair since $\mut{Z}{\hat{Y}}=0.53$. This is a case with pure Redundant Disparity since information about $Z$ is derived from both $\hat{Y}$ and $S$. Global and Local Disparities are in disagreement. Here, $\uni{Z}{\hat{Y}|S}=0$, $\rd{Z}{\hat{Y},S}=0.53$, and $\syn{Z}{\hat{Y},S}=0$.

\end{example}

In general, pure Redundant Disparity is observed when $Z$ and $\hat{Y}$ are correlated and $Z-S-\hat{Y}$ form a \emph{Markov chain}, i.e., $\hat{Y}=S$ and $S=g(Z)$ for some function $g$.
\begin{example}[Pure Synergy]\label{exp:syn}
Let $\hat{Y}=Z \oplus S$ and $Z \indep S$. The model approves men ($Z=1$) from client $S=0$ and women ($Z=0$) from client $S=1$, while others are rejected. Such a model is locally unfair, as it singularly prefers one gender within each client with $\mut{Z}{\hat{Y}|S}=1$. However, it is globally fair since it maintains an equal approval rate for both men and women with $\mut{Z}{\hat{Y}}=0$. This is a case with pure Masked Disparity as information about $Z$ that is not observable in either $\hat{Y}$ or $S$ individually is present jointly. Here, $\uni{Z}{\hat{Y}|S}=0$, $\rd{Z}{\hat{Y},S}=0$, and $\syn{Z}{\hat{Y},S}=1$.
\end{example}

\textbf{Merits of PID.} These canonical examples demonstrate scenarios with pure uniqueness, redundancy, and synergy. In practice, there is usually a mixture of all of these disparities. i.e., non-zero Unique, Redundant, and Masked Disparities. In these scenarios, PID serves as a powerful tool that can disentangle the regions of agreement and disagreement between Local and Global Disparity, particularly when data is distributed non-identically across clients (also see experiments in Section~\ref{sec:experiments}). In contrast, traditional fairness metrics lack the granularity to capture these nuanced interactions, making PID an essential asset for a more comprehensive understanding and mitigation of disparities. Using PID, we can uncover the fundamental information-theoretic limits and trade-offs between Global and Local Disparities, which we will examine in greater depth next.

\subsection{Fundamental Limits on Tradeoffs Between Local and Global Disparity}\label{sec:Tradeoff}

We examine the use of local fairness to achieve global fairness, or scenarios where a model is trained to achieve local fairness and subsequently deployed at
the global level.  Since clients have direct access only to their data, implementing local disparity mitigation techniques at the individual client level is both practical and convenient.   Studies such as~\cite{cui2021addressing} argue that local fairness is important as models are deployed at the local client level. However, this raises a critical question about the impact on global fairness. In Theorem \ref{thm: imp}, we formally demonstrate that even if local clients are able to use some optimal local mitigation methods and model aggregation techniques to achieve local fairness, the Global Disparity may still be greater than zero.

\begin{restatable}[Impossibility of Using Local Fairness to Attain Global Fairness]
{theorem}{locnotglob}As long as Redundant Disparity $\rd{Z}{\hat{Y},S}>0$, the Global Disparity $\mut{Z}{\hat{Y}}>0$ even if Local  Disparity goes to $0$.
\label{thm: imp}
\end{restatable}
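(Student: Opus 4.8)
The key identity is Equation~\eqref{eqn:gblPID}, which states $\mut{Z}{\hat{Y}} = \uni{Z}{\hat{Y}|S} + \rd{Z}{\hat{Y},S}$, together with the non-negativity of all PID terms (established in Appendix~\ref{apx:PID}, which I may assume). Given this, the proof is essentially immediate, so the plan is really to spell out the logic carefully rather than to perform any computation.

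First I would fix notation: let $\hat{Y}$ denote the prediction of whatever model results from the local mitigation and aggregation procedure, so that all the quantities $\mut{Z}{\hat{Y}}$, $\mut{Z}{\hat{Y}|S}$, $\uni{Z}{\hat{Y}|S}$, $\rd{Z}{\hat{Y},S}$ refer to this same $\hat{Y}$. The crucial point is that Redundant Disparity $\rd{Z}{\hat{Y},S}$ is a fixed summand of the Global Disparity by \eqref{eqn:gblPID}. Then I would argue: since $\uni{Z}{\hat{Y}|S} \geq 0$ by non-negativity of PID terms, \eqref{eqn:gblPID} gives $\mut{Z}{\hat{Y}} \geq \rd{Z}{\hat{Y},S}$. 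Hence if $\rd{Z}{\hat{Y},S} > 0$, then $\mut{Z}{\hat{Y}} > 0$, regardless of the value of $\uni{Z}{\hat{Y}|S}$ — and in particular this holds even in the limit where Local Disparity $\mut{Z}{\hat{Y}|S} = \uni{Z}{\hat{Y}|S} + \syn{Z}{\hat{Y},S} \to 0$ (using \eqref{eq:locPID}), which forces $\uni{Z}{\hat{Y}|S} \to 0$ but leaves $\rd{Z}{\hat{Y},S}$ untouched. So global fairness fails.

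The only substantive subtlety — and the thing I would be most careful to articulate rather than the "main obstacle" in a technical sense — is why driving Local Disparity to zero does not also drive Redundant Disparity to zero. The point is that $\rd{Z}{\hat{Y},S}$ depends on the joint distribution of $(Z,\hat{Y},S)$ in a way that is not controlled by $\mut{Z}{\hat{Y}|S}$: Redundant Disparity captures information about $Z$ shared between $\hat{Y}$ and $S$, and local mitigation acts within each client (conditioning on $S$) and so can only affect the components of disparity that survive that conditioning, namely Unique and Masked (Synergistic). I would make this concrete by appealing to Example~\ref{exp:pureRED} (Pure Redundancy): there $\hat{Y}=S$ with $S=g(Z)$, so $\mut{Z}{\hat{Y}|S}=0$ exactly while $\mut{Z}{\hat{Y}} = \rd{Z}{\hat{Y},S} = 0.53 > 0$, exhibiting a concrete model that is perfectly locally fair yet globally unfair whenever the client partition is itself correlated with $Z$. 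This shows the conclusion of the theorem is tight and not vacuous. I would close by remarking that $\rd{Z}{\hat{Y},S}$ is bounded above by $\mut{S}{Z}$ (the redundant sub-volume lies inside $\mut{Z}{S}$, cf.\ Fig.~\ref{volume}), so the obstruction is exactly an artifact of data heterogeneity — when $Z \indep S$, $\rd{Z}{\hat{Y},S}=0$ and the impossibility disappears, consistent with the known alignment of local and global fairness under i.i.d.\ data.
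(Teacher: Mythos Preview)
Your proposal is correct and follows essentially the same route as the paper: both use the decomposition $\mut{Z}{\hat{Y}}=\uni{Z}{\hat{Y}|S}+\rd{Z}{\hat{Y},S}$ together with non-negativity of the PID terms to conclude $\mut{Z}{\hat{Y}}\geq \rd{Z}{\hat{Y},S}>0$, noting that $\mut{Z}{\hat{Y}|S}\to 0$ forces $\uni{Z}{\hat{Y}|S}\to 0$ (and $\syn{Z}{\hat{Y},S}\to 0$) via \eqref{eq:locPID} but leaves the redundant term intact. The only difference is that the paper reproves the non-negativity of the PID terms in full within the proof of this theorem (for self-containedness), whereas you cite Appendix~\ref{apx:PID}; your additional remarks invoking Example~\ref{exp:pureRED} and the bound $\rd{Z}{\hat{Y},S}\leq \mut{Z}{S}$ are correct and mirror the surrounding discussion in the paper.
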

In order to achieve local fairness,  Unique and Masked Disparities must be reduced to zero. The proof leverages Proposition \ref{prop:decomposition}, particularly relying on non-negativity of Unique and Redundant Disparities (see Appendix \ref{app:main}). Recall, Example~\ref{exp:pureRED} (Pure Redundancy), where the Local  Disparity was zero, but the Global Disparity was $0.53$ as a result of the Redundant Disparity.
This is not uncommon in real-world scenarios. Sensitive attributes like race or ethnicity may be correlated with location. For instance, one hospital may mainly cater to White patients, whereas another could predominantly serve Black patients. A model may be trained to achieve local fairness but would fail to be globally fair due to a non-zero Redundant Disparity, highlighting the region of disagreement (see Fig.~\ref{fig:decomposition}).

We now consider the scenario where a model is able to achieve global fairness and is subsequently deployed at the local client level. 

\begin{restatable}[Global Fairness Does Not Imply Local Fairness]{theorem}{globnotloc}\label{Thm:Globnotloc} As long as Masked Disparity $\syn{Z}{\hat{Y},S}{>}0,$  local fairness will not be attained even if global fairness is attained. 
\end{restatable}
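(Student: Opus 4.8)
The plan is to prove the contrapositive-flavored statement directly from the decomposition in Proposition~\ref{prop:decomposition}. Recall that \Eqref{eq:locPID} gives $\mut{Z}{\hat{Y}|S}=\uni{Z}{\hat{Y}|S} + \syn{Z}{\hat{Y},S}$, and that all PID terms are non-negative (established in Appendix~\ref{apx:PID}). Local fairness, by Definition~\ref{def:locDis} and Lemma~\ref{lem:muteqsp_s}, is exactly the statement $\mut{Z}{\hat{Y}|S}=0$. So the argument is essentially a one-line consequence: if $\syn{Z}{\hat{Y},S}>0$, then since $\uni{Z}{\hat{Y}|S}\geq 0$ we get $\mut{Z}{\hat{Y}|S}=\uni{Z}{\hat{Y}|S}+\syn{Z}{\hat{Y},S}\geq \syn{Z}{\hat{Y},S}>0$, hence local fairness fails, regardless of the value of the Global Disparity $\mut{Z}{\hat{Y}}$.

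The one subtlety worth spelling out is the clause ``even if global fairness is attained.'' I would note that global fairness, i.e.\ $\mut{Z}{\hat{Y}}=0$, forces $\uni{Z}{\hat{Y}|S}=0$ and $\rd{Z}{\hat{Y},S}=0$ via \Eqref{eqn:gblPID} together with non-negativity. This shows the two hypotheses are jointly consistent: a model can simultaneously satisfy $\mut{Z}{\hat{Y}}=0$ and $\syn{Z}{\hat{Y},S}>0$, and in that regime $\mut{Z}{\hat{Y}|S}=\syn{Z}{\hat{Y},S}>0$. To make the statement non-vacuous I would point back to Example~\ref{exp:syn} (Pure Synergy), where $\hat{Y}=Z\oplus S$ with $Z\indep S$ realizes exactly this: $\mut{Z}{\hat{Y}}=0$ but $\mut{Z}{\hat{Y}|S}=\syn{Z}{\hat{Y},S}=1$.

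There is no real obstacle here; the ``hard part,'' such as it is, is simply invoking the right pieces cleanly — namely that the PID decomposition of the conditional mutual information is valid (the chain rule step behind \Eqref{eq:locPID}) and that each summand is non-negative. Both are imported from earlier in the paper, so the proof reduces to assembling the inequality $\mut{Z}{\hat{Y}|S}\geq\syn{Z}{\hat{Y},S}$ and observing it is strict under the hypothesis. I would keep the write-up to a few sentences and end by contrasting with Theorem~\ref{thm: imp}: there Redundant Disparity obstructs passing from local to global fairness, whereas here Masked Disparity obstructs passing from global to local fairness, which is precisely the ``disagreement region'' picture in Fig.~\ref{fig:decomposition}.
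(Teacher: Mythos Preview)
Your proposal is correct and follows essentially the same approach as the paper: invoke the decomposition $\mut{Z}{\hat{Y}|S}=\uni{Z}{\hat{Y}|S}+\syn{Z}{\hat{Y},S}$ together with non-negativity of the PID terms, note that global fairness forces the Unique and Redundant terms to vanish, and conclude $\mut{Z}{\hat{Y}|S}=\syn{Z}{\hat{Y},S}>0$. The paper's write-up is terser (a two-line limit argument), but the logical content is identical; your added remarks on consistency via Example~\ref{exp:syn} and the contrast with Theorem~\ref{thm: imp} are nice touches that go slightly beyond what the paper spells out.
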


To achieve global fairness, the Unique and Redundant Disparities must reduce to zero.  Recall Example~\ref{exp:syn} (Pure Synergy), where the model accepts men from client $S=0$ and women from client $S=1$, while rejecting all others. While this model is globally fair, it is not locally fair. This demonstrates that while it is possible to train a model to achieve global fairness, it may still exhibit disparity when deployed at the local level due to the canceling of disparities between clients. This effect is captured by the Masked Disparity.

We now discuss 
the necessary and sufficient conditions to achieve global fairness using local fairness.

\begin{restatable}[Necessary and Sufficient Condition to Achieve Global Fairness Using Local Fairness]{theorem}{NSC}\label{Thm: NesSuf}
If Local  Disparity $\mut{Z}{\hat{Y}|S}$ goes to zero, then  Global Disparity $\mut{Z}{\hat{Y}}$ also goes to zero, if and only if the Redundant Disparity $ \rd{Z}{\hat{Y},S}{=}0$. A sufficient condition for $ \rd{Z}{\hat{Y},S}{=}0$ is $Z {\indep} S$. 
\end{restatable}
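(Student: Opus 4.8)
The plan is to work directly from the PID decomposition in Proposition~\ref{prop:decomposition}, which gives $\mut{Z}{\hat{Y}} = \uni{Z}{\hat{Y}|S} + \rd{Z}{\hat{Y},S}$ and $\mut{Z}{\hat{Y}|S} = \uni{Z}{\hat{Y}|S} + \syn{Z}{\hat{Y},S}$, together with the non-negativity of all three PID terms. First I would establish the ``if and only if'' for the limiting statement. Suppose Local Disparity $\mut{Z}{\hat{Y}|S} \to 0$. By the second identity and non-negativity, this forces both $\uni{Z}{\hat{Y}|S} \to 0$ and $\syn{Z}{\hat{Y},S} \to 0$. Substituting into the first identity, $\mut{Z}{\hat{Y}} = \uni{Z}{\hat{Y}|S} + \rd{Z}{\hat{Y},S} \to \rd{Z}{\hat{Y},S}$ in the limit (more precisely, $\mut{Z}{\hat{Y}}$ and $\rd{Z}{\hat{Y},S}$ differ by the vanishing term $\uni{Z}{\hat{Y}|S}$). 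Hence $\mut{Z}{\hat{Y}} \to 0$ precisely when $\rd{Z}{\hat{Y},S} = 0$; this is exactly the contrapositive packaging of Theorem~\ref{thm: imp}, now stated as an equivalence, so the reverse direction ($\rd{Z}{\hat{Y},S}=0 \Rightarrow \mut{Z}{\hat{Y}} \to 0$) is immediate from the same substitution.

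Next I would handle the sufficient condition $Z \indep S$. The claim is that independence of the sensitive attribute and the client label kills Redundant Disparity. The cleanest route is via the Venn-diagram characterization already quoted in the preliminaries: $\rd{Z}{\hat{Y},S}$ is the overlap between $\mut{Z}{\hat{Y}}$ and $\mut{Z}{S}$, and in particular $0 \le \rd{Z}{\hat{Y},S} \le \min\{\mut{Z}{\hat{Y}}, \mut{Z}{S}\}$. If $Z \indep S$ then $\mut{Z}{S} = 0$, so the upper bound forces $\rd{Z}{\hat{Y},S} = 0$. I would present it this way since the bound $\rd{Z}{A,B} \le \min\{\mut{Z}{A}, \mut{Z}{B}\}$ follows directly from $\rd{Z}{A,B} = \mut{Z}{A} - \uni{Z}{A|B}$ (from the excerpt) plus non-negativity of $\uni{Z}{A|B}$, and symmetrically in $B$ — no appeal to the specific form of Definition~\ref{def:bert_def} is needed.

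The main obstacle is not really the algebra — it is being careful about the phrase ``goes to zero'' versus ``equals zero.'' The theorem is stated as a limiting implication (Local Disparity $\to 0$), so strictly speaking the argument should be phrased either (a) as a statement about sequences of models/predictors $\hat{Y}^{(n)}$ with $\mut{Z}{\hat{Y}^{(n)}|S} \to 0$, using continuity of all the PID functionals (each is a continuous function of the joint distribution on a fixed finite alphabet, so the $\uni$, $\rd$, $\syn$ terms all vanish in the limit by the decomposition and non-negativity), or (b) by simply noting that for any fixed joint distribution the identities hold exactly, so ``$\mut{Z}{\hat{Y}|S}=0 \iff \mut{Z}{\hat{Y}}=0$ given $\rd{Z}{\hat{Y},S}=0$'' and then invoking continuity for the genuinely asymptotic reading. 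I would go with option (b) for brevity, matching the style of Theorems~\ref{thm: imp} and~\ref{Thm:Globnotloc}, and remark that the only subtlety is that $\mut{Z}{\hat{Y}}$ can be bounded by $\uni{Z}{\hat{Y}|S} + \rd{Z}{\hat{Y},S}$ where the first summand is controlled by $\mut{Z}{\hat{Y}|S}$. One should also note explicitly that $Z \indep S$ is sufficient but not necessary: $\rd{Z}{\hat{Y},S}=0$ can hold even when $\mut{Z}{S}>0$, as long as the information $S$ carries about $Z$ is disjoint (in the PID sense) from what $\hat{Y}$ carries — e.g. when it is entirely synergistic or unique-to-$S$ — which is worth flagging since the theorem statement claims sufficiency only.
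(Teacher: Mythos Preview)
Your proposal is correct and follows essentially the same route as the paper: both parts are derived directly from the PID identities in Proposition~\ref{prop:decomposition} together with non-negativity, and your sufficient-condition argument via $\rd{Z}{\hat{Y},S} \le \mut{Z}{S}$ is exactly what the paper does (it writes it as $\mut{Z}{S} = \uni{Z}{S|\hat{Y}} + \rd{Z}{\hat{Y},S} \ge \rd{Z}{\hat{Y},S}$, which is the same inequality). Your extra care about the ``goes to zero'' versus ``equals zero'' reading is more than the paper itself provides---the paper simply sets $\mut{Z}{\hat{Y}|S}=0$ exactly and reads off the equivalence---so your option~(b) matches the paper's level of rigor.
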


Theorem~\ref{Thm: NesSuf} suggests that if the sensitive attributes is uniformly distributed across clients the Redundant Disparity will reduce to zero (see proof in Appendix \ref{app:main}). Hence, when the Local  Disparity goes to zero, the Global Disparity will also decrease to zero. However, in practice, this proportion is fixed since the dataset at each client cannot be changed, i.e., $\mut{Z}{S}$ is fixed. Therefore, we examine another more controllable condition to eliminate Redundant Disparity even when $\mut{Z}{S} > 0$.

One might think that a potential solution to have $ \rd{Z}{\hat{Y},S}=0$ is to enforce independence between $\hat{Y}$ and $S$, i.e., the model should make predictions at the same rate across all clients. However, interestingly, the PID literature demonstrates counterexamples~\citep{kolchinsky2022novel} where this does not hold. We show that an additional condition of $\syn{Z}{\hat{Y}, S}= 0$ is required.

\begin{restatable}{lemma}{IYS}\label{lem:zeroRed}
A sufficient condition for $\rd{Z}{\hat{Y},S}=0$ is $\syn{Z}{\hat{Y},S}= 0$ and $\hat{Y}\indep S$. 
\end{restatable}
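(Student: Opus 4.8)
\textbf{Proof proposal for Lemma~\ref{lem:zeroRed}.}

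The plan is to work directly from the PID identities established in the preliminaries, combined with the chain rule of mutual information. Recall from \Eqref{eq:pid} applied to the triple $(Z;\hat{Y},S)$ that
\begin{align}
\mut{Z}{\hat{Y},S} = \uni{Z}{\hat{Y}|S} + \uni{Z}{S|\hat{Y}} + \rd{Z}{\hat{Y},S} + \syn{Z}{\hat{Y},S}, \label{eq:lempid}
\end{align}
and from the sub-volume relations that $\mut{Z}{\hat{Y}} = \uni{Z}{\hat{Y}|S} + \rd{Z}{\hat{Y},S}$ and, symmetrically, $\mut{Z}{S} = \uni{Z}{S|\hat{Y}} + \rd{Z}{\hat{Y},S}$. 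So first I would assume $\syn{Z}{\hat{Y},S}=0$ and $\hat{Y}\indep S$, and aim to conclude $\rd{Z}{\hat{Y},S}=0$.

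The key step is to exploit $\hat{Y}\indep S$. I would like to upgrade this unconditional independence to the statement that $S$ carries \emph{no} information relevant to the decomposition beyond what it shares with $Z$ — more precisely, the redundancy $\rd{Z}{\hat{Y},S}$ is a pointwise-minimum-style quantity (as in Definition~\ref{def:bert_def}, $\rd{Z}{\hat{Y},S} = \mut{Z}{\hat{Y}} - \uni{Z}{\hat{Y}|S} = \min_{Q\in\Delta_p}\mut{Z}{\hat{Y}|S}_Q$ reversed appropriately). The cleanest route: since $\syn{Z}{\hat{Y},S}=0$, \Eqref{eq:lempid} collapses to $\mut{Z}{\hat{Y},S} = \uni{Z}{\hat{Y}|S} + \uni{Z}{S|\hat{Y}} + \rd{Z}{\hat{Y},S}$. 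Now use the chain rule $\mut{Z}{\hat{Y},S} = \mut{Z}{\hat{Y}} + \mut{Z}{S|\hat{Y}}$. Substituting $\mut{Z}{\hat{Y}} = \uni{Z}{\hat{Y}|S} + \rd{Z}{\hat{Y},S}$ gives $\mut{Z}{S|\hat{Y}} = \uni{Z}{S|\hat{Y}}$. Running the same chain-rule argument with the roles of $\hat{Y}$ and $S$ swapped yields $\mut{Z}{\hat{Y}|S} = \uni{Z}{\hat{Y}|S}$, i.e. the synergy-free assumption forces both conditional mutual informations to equal the corresponding unique terms (consistent with \Eqref{eq:locPID}). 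The remaining leverage comes from $\hat{Y}\indep S$: I would argue that under $\hat{Y}\indep S$ the redundancy $\rd{Z}{\hat{Y},S}$ cannot exceed $\mut{\hat{Y}}{S}=0$ in the relevant sense — concretely, redundant information shared between $\hat{Y}$ and $S$ about $Z$ is bounded by the total information shared between $\hat{Y}$ and $S$, which is zero. This monotonicity-type bound ($\rd{Z}{\hat{Y},S}\le \mut{\hat{Y}}{S}$, or an equivalent consequence of the Bertschinger et al.\ definition) is the crux; with it, $\rd{Z}{\hat{Y},S}=0$ is immediate, and hence $\mut{Z}{\hat{Y}} = \uni{Z}{\hat{Y}|S}$.

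The main obstacle I anticipate is justifying the bound $\rd{Z}{\hat{Y},S}\le \mut{\hat{Y}}{S}$ rigorously from Definition~\ref{def:bert_def}, since the Bertschinger et al.\ redundancy is not trivially monotone in the pairwise mutual information of the two ``source'' variables. One workaround: appeal directly to the optimization in Definition~\ref{def:bert_def}. We have $\uni{Z}{\hat{Y}|S} = \min_{Q\in\Delta_p}\mut{Z}{\hat{Y}|S}_Q \le \mut{Z}{\hat{Y}|S}_{Q^*}$ where $Q^*$ is the distribution that keeps the marginals $(Z,\hat{Y})$ and $(Z,S)$ but makes $\hat{Y}$ and $S$ conditionally independent given $Z$; under the additional hypothesis $\hat{Y}\indep S$, I would show $Q^*$ actually coincides with (or is dominated by) the true distribution in the right way so that $\mut{Z}{\hat{Y}|S}_{Q^*} = \mut{Z}{\hat{Y}}$, giving $\uni{Z}{\hat{Y}|S}\ge \mut{Z}{\hat{Y}}$ (the reverse inequality $\uni{Z}{\hat{Y}|S}\le \mut{Z}{\hat{Y}}$ always holds since $\rd{}{}\ge 0$), whence $\rd{Z}{\hat{Y},S} = \mut{Z}{\hat{Y}} - \uni{Z}{\hat{Y}|S} = 0$. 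So the heart of the argument is a careful manipulation of the marginal-preserving optimization, using $\hat{Y}\indep S$ to control the feasible set $\Delta_p$; everything else is bookkeeping with \Eqref{eq:pid}, the sub-volume identities, and the chain rule. I would also double-check against the known counterexamples in~\cite{kolchinsky2022novel} to confirm that the synergy-zero hypothesis is genuinely needed and that the argument breaks exactly where it should when $\syn{Z}{\hat{Y},S}>0$.
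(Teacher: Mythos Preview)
Your proposal has a genuine gap at exactly the step you flag as ``the crux.'' The workaround via $Q^*$ (the distribution in $\Delta_p$ with $\hat{Y}\indep S\mid Z$) does not deliver what you need: evaluating at any particular $Q^*\in\Delta_p$ gives an \emph{upper} bound $\uni{Z}{\hat{Y}|S}\le \mutd{Q^*}{Z}{\hat{Y}|S}$, yet you then conclude $\uni{Z}{\hat{Y}|S}\ge \mut{Z}{\hat{Y}}$ --- the inequality runs the wrong way. Moreover, the claim $\mutd{Q^*}{Z}{\hat{Y}|S}=\mut{Z}{\hat{Y}}$ is itself unjustified: one computes (via the chain rule and $\mutd{Q^*}{S}{\hat{Y}|Z}=0$) that $\mutd{Q^*}{Z}{\hat{Y}|S}=\mut{Z}{\hat{Y}}-\mutd{Q^*}{\hat{Y}}{S}$, and the unconditional independence $\hat{Y}\indep S$ in the \emph{true} distribution says nothing about $\mutd{Q^*}{\hat{Y}}{S}$, since $Q^*$ only fixes the $(Z,\hat{Y})$ and $(Z,S)$ marginals, not the $(\hat{Y},S)$ marginal.

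The missing idea is the \emph{symmetry of the interaction information}. From Proposition~\ref{prop:decomposition} one has
\[
\mut{Z}{\hat{Y}}-\mut{Z}{\hat{Y}|S}=\rd{Z}{\hat{Y},S}-\syn{Z}{\hat{Y},S},
\]
so under the hypothesis $\syn{Z}{\hat{Y},S}=0$ the interaction information $\mut{Z}{\hat{Y}}-\mut{Z}{\hat{Y}|S}$ equals $\rd{Z}{\hat{Y},S}\ge 0$. But the interaction information is symmetric in its three arguments, hence
\[
\rd{Z}{\hat{Y},S}=\mut{Z}{\hat{Y}}-\mut{Z}{\hat{Y}|S}=\mut{\hat{Y}}{S}-\mut{\hat{Y}}{S|Z}\le \mut{\hat{Y}}{S}.
\]
Now $\hat{Y}\indep S$ forces the right-hand side to $0$, and non-negativity of redundancy finishes the proof. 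This is exactly the bound $\rd{Z}{\hat{Y},S}\le \mut{\hat{Y}}{S}$ you were looking for --- it is \emph{not} a general monotonicity property of the Bertschinger et~al.\ redundancy (and indeed fails without the synergy-zero assumption, as the Kolchinsky counterexamples you mention confirm), but rather a consequence of the identity $\mathrm{Red}-\mathrm{Syn}=\mut{\hat{Y}}{S}-\mut{\hat{Y}}{S|Z}$. Your chain-rule manipulations in the first half are correct but ultimately unnecessary once you invoke this symmetry.
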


\begin{remark}

It is worth noting that the independence between $\hat{Y}$ and $S$ can be approximately achieved if the true $Y$ and $S$ are independent, as $\hat{Y}$ is an estimation of $Y$.  The mutual information $\mut{Y}{S}$ can provide insights into the anticipated value of $\mut{\hat{Y}}{S}$, as FL typically aims to also achieve a reasonable level of accuracy. However, it is often the case that $\mut{\hat{Y}}{S}$ is fixed due to the fixed nature of datasets at each client. It may even be possible to enforce $\hat{Y}\indep S$ at the cost of accuracy. 
\end{remark}

Lastly, we examine conditions to attain local fairness through global fairness.

\begin{restatable}{theorem}{markovsyn} Local disparity will always be less than Global Disparity if and only if Masked Disparity $\syn{Z}{\hat{Y},S}=0$. A sufficient condition is when $Z - \hat{Y} - S$ form a Markov chain. 
  \label{thm:Globtoloc}
\end{restatable}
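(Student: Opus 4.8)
My plan is to derive everything from the two decompositions in Proposition~\ref{prop:decomposition}. Subtracting \eqref{eq:locPID} from \eqref{eqn:gblPID} isolates the quantity the theorem is about:
\[
\mut{Z}{\hat{Y}} - \mut{Z}{\hat{Y}|S} \;=\; \rd{Z}{\hat{Y},S} - \syn{Z}{\hat{Y},S},
\]
so whether Local Disparity stays below Global Disparity is decided purely by which of the Redundant or the Masked term is larger. The ``if'' direction is then immediate: if $\syn{Z}{\hat{Y},S}=0$ the right-hand side is $\rd{Z}{\hat{Y},S}\ge 0$ by non-negativity of the PID terms (Appendix~\ref{apx:PID}), so $\mut{Z}{\hat{Y}|S}\le\mut{Z}{\hat{Y}}$; equivalently $\mut{Z}{\hat{Y}|S}=\uni{Z}{\hat{Y}|S}\le \uni{Z}{\hat{Y}|S}+\rd{Z}{\hat{Y},S}=\mut{Z}{\hat{Y}}$.

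For the ``only if'' direction I would argue the contrapositive. If $\syn{Z}{\hat{Y},S}>0$, Local Disparity is no longer guaranteed to lie below Global Disparity, because the Redundant term can vanish on its own: by Theorem~\ref{Thm: NesSuf}, $Z\indep S$ forces $\rd{Z}{\hat{Y},S}=0$ (more generally $\rd{Z}{\hat{Y},S}\le\mut{Z}{S}$ from the Venn sub-volume relation), and then the displayed identity reads $\mut{Z}{\hat{Y}} - \mut{Z}{\hat{Y}|S} = -\syn{Z}{\hat{Y},S}<0$. The Pure Synergy model of Example~\ref{exp:syn} ($\hat{Y}=Z\oplus S$ with $Z\indep S$) is the extreme witness, with $\mut{Z}{\hat{Y}}=0<1=\mut{Z}{\hat{Y}|S}$; this is exactly the phenomenon already recorded in Theorem~\ref{Thm:Globnotloc}. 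Hence $\syn{Z}{\hat{Y},S}=0$ is necessary for Local $\le$ Global to be guaranteed.

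For the sufficient condition, I would use that PID is symmetric in its two ``source'' arguments, so the analogue of \eqref{eq:locPID} with $\hat{Y}$ and $S$ swapped also holds: $\mut{Z}{S|\hat{Y}}=\uni{Z}{S|\hat{Y}}+\syn{Z}{\hat{Y},S}$. This follows from the four-term PID of $\mut{Z}{\hat{Y},S}$ together with the Venn identity $\mut{Z}{S}=\uni{Z}{S|\hat{Y}}+\rd{Z}{\hat{Y},S}$ and the chain rule, in the same way \eqref{eq:locPID} does. If $Z-\hat{Y}-S$ is a Markov chain then $Z\indep S\,|\,\hat{Y}$, i.e.\ $\mut{Z}{S|\hat{Y}}=0$, and non-negativity of the two terms on the right forces $\syn{Z}{\hat{Y},S}=0$; the ``if'' direction then delivers Local $\le$ Global. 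It is sufficient but not necessary, since $\syn{Z}{\hat{Y},S}=0$ does not imply $\mut{Z}{S|\hat{Y}}=0$ (one may still have $\uni{Z}{S|\hat{Y}}>0$).

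The one place that needs care is the reading of ``always'' in the necessity argument: it should be understood as a guarantee over the predictors $\hat{Y}$ an FL strategy could produce for the given $(Z,S)$, so that the $Z\indep S$ / Example~\ref{exp:syn} construction legitimately falsifies the inequality rather than merely a weaker claim about one fixed joint law. Beyond pinning that down, the proof is bookkeeping with the non-negativity of the PID components and the symmetry of the decomposition, so I do not anticipate a substantive obstacle.
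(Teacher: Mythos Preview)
Your proof matches the paper's: it derives $\syn{Z}{\hat{Y},S}=0$ from the Markov chain via the symmetric decomposition $\mut{Z}{S|\hat{Y}}=\uni{Z}{S|\hat{Y}}+\syn{Z}{\hat{Y},S}$, and then obtains Local $\le$ Global from $\mut{Z}{\hat{Y}|S}=\uni{Z}{\hat{Y}|S}\le\uni{Z}{\hat{Y}|S}+\rd{Z}{\hat{Y},S}=\mut{Z}{\hat{Y}}$ by non-negativity of the PID terms---exactly the two steps the paper records. The paper's proof stops there and does not argue the ``only if'' direction at all; your contrapositive with the Example~\ref{exp:syn} witness, and your caveat that ``always'' must be read as a guarantee over predictors rather than a claim about one fixed joint law, goes beyond what the paper supplies (and the caveat is warranted, since for a single fixed distribution one can have $\syn{Z}{\hat{Y},S}>0$ yet $\rd{Z}{\hat{Y},S}>\syn{Z}{\hat{Y},S}$, so the literal pointwise biconditional would fail).
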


\begin{remark}[Extension to Personalized Federated Learning Setting]
Interestingly, our results extend to the personalized FL setting, where client \(s\) can tailor the final global model \(\hat{Y} = f(X)\) into a personalized version to improve local performance, \(\hat{Y} = f_s(X)\). In this case, we can define the global model as a random variable \(\hat{Y} = g(X, S)\) and all of the propositions would hold. 
\end{remark}

\subsection{An Optimization Framework for Exploring the Accuracy Fairness Trade-off}\label{sec:AccTrade}

We investigate the inherent trade-off between model accuracy and fairness in the FL context. We formulate the \emph{Accuracy and Global-Local Fairness Optimality Problem (AGLFOP)}, an optimization to delineate the theoretical boundaries of accuracy and fairness trade-offs, capturing the optimal performance any model or FL technique can achieve for a specified dataset and client distribution.

 Let $\Delta$ be the set of all joint distributions defined for $(Z,S,Y, \hat{Y})$. 
  Let $\Delta_p$ be a set of all joint distributions $Q \in \Delta$ that maintain fixed marginals on $(Z, S, Y)$ as determined by a given dataset and client distribution, i.e., $\Delta_p = \{ Q \in \Delta: \Pr_Q(Z{=}z,S{=} s, Y{=}y) = \Pr(Z{=}z, S{=}s, Y{=}y) , \forall z, s, y \}.$

\begin{definition}[Accuracy and Global-Local Fairness Optimality Problem (AGLFOP)]\label{def:AGLFOP}
Let  $\mutd{Q}{Z}{\hat{Y}}$ and $\mutd{Q}{Z}{\hat{Y}|S}$ be Global and Local Disparity under distribution $Q$. Then, the AGLFOP for a specific dataset and client distribution is an optimization of the form:
\begin{equation}
\begin{aligned}
& \underset{Q \in \Delta_p}{\argmin}
& & err(Q) & & \text{subject to}
& \mutd{Q}{Z}{\hat{Y}} \leq \epsilon_g, & & \mutd{Q}{Z}{\hat{Y}|S} \leq \epsilon_l,
\end{aligned}
\end{equation}
where $
 err(Q) {=} \sum_{z,s,y,\hat{y}} \Pr_Q(Z{=}z, S{=}s,Y{=}y,\hat{Y}{=}\hat{y})  \mathbb{I}(y \neq \hat{y})
,$ the classification error under distribution $Q$ ($\mathbb{I}(\cdot$) denotes the indicator function). $err(Q) \in [0,1]$ quantifies the proportion of incorrect predictions, calculated as the summation of the probabilities of misclassifying the true labels. The complement of the classification error,  $1-err(Q)$, quantifies the accuracy.
\end{definition}

\begin{restatable}{theorem}{convexshow}
    The AGLFOP is a convex optimization problem. 
\end{restatable}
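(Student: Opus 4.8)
The plan is to show that the AGLFOP fits the template of a convex program: a convex (in fact linear) objective, minimized over a feasible set that is the intersection of an affine subspace with convex constraint sets. The decision variable is the joint distribution $Q$ over the finite alphabet of $(Z,S,Y,\hat{Y})$, which I will regard as a vector $q \in \R^{N}$ with $N = |Z||S||Y||\hat{Y}|$ entries $q_{z,s,y,\hat{y}} = \Pr_Q(Z{=}z,S{=}s,Y{=}y,\hat{Y}{=}\hat{y})$. The first step is to observe that the ambient set of valid distributions $\Delta$ is the probability simplex in $\R^N$ (all entries $\geq 0$, entries sum to $1$), which is convex, and that $\Delta_p$ imposes the additional linear equality constraints $\sum_{\hat{y}} q_{z,s,y,\hat{y}} = \Pr(Z{=}z,S{=}s,Y{=}y)$ for every $(z,s,y)$. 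Since these are affine constraints, $\Delta_p$ is the intersection of an affine subspace with the simplex, hence a convex (indeed polyhedral) set.

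Next I would handle the objective. Writing $err(Q) = \sum_{z,s,y,\hat{y}} q_{z,s,y,\hat{y}} \, \mathbb{I}(y\neq\hat{y})$ makes it manifestly a linear functional of $q$, hence convex. So the only substantive work is the two fairness constraints, and the crux of the proof is showing that $Q \mapsto \mutd{Q}{Z}{\hat{Y}}$ and $Q \mapsto \mutd{Q}{Z}{\hat{Y}|S}$ are convex functions of $q$ on $\Delta_p$. This is where I expect the main subtlety. In general, mutual information $\mut{Z}{\hat{Y}}$ is \emph{concave} in the conditional $p(\hat{y}\mid z)$ for fixed marginal $p(z)$, and \emph{convex} in the conditional $p(z,\hat{y})$-to-... — so the direction of convexity depends on which coordinates are held fixed. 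The saving observation is precisely that on $\Delta_p$ the marginal $\Pr(Z{=}z)$ (and more, the full joint $\Pr(Z,S,Y)$) is fixed. For fixed input marginal $p_Z$, mutual information $I(Z;\hat{Y})$ is a convex function of the channel $p(\hat{y}\mid z)$; but here the free variable is the joint $q$, and $p(\hat{y}\mid z) = q(z,\hat{y})/p_Z(z)$ is a linear (affine, since $p_Z$ fixed) function of $q$. Composition of a convex function with an affine map is convex, so $I_Q(Z;\hat{Y})$ is convex in $q$ on $\Delta_p$.

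For the conditional mutual information $\mutd{Q}{Z}{\hat{Y}|S} = \sum_s \Pr(S{=}s)\, \mutd{Q}{Z}{\hat{Y}|S{=}s}$, I would argue termwise: $\Pr(S{=}s)$ is fixed on $\Delta_p$, and within the slice $S{=}s$ the conditional marginal $\Pr(Z{=}z\mid S{=}s)$ is also fixed (it equals $\Pr(Z{=}z,S{=}s)/\Pr(S{=}s)$, all fixed), so the same convexity-of-$I$-in-the-channel argument applies to each $\mutd{Q}{Z}{\hat{Y}|S{=}s}$ as a function of $q$, and a nonnegative-weighted sum of convex functions is convex. The cleanest way to package the convexity-of-$I$-in-the-channel fact is to write $I(Z;\hat{Y}) = \sum_z p_Z(z)\, \KL\!\big(p(\cdot\mid z)\,\|\,p_{\hat{Y}}\big)$ and invoke joint convexity of KL divergence in its pair of arguments, noting both $p(\cdot\mid z)$ and $p_{\hat{Y}} = \sum_z p_Z(z) p(\cdot\mid z)$ are affine in $q$ — this gives convexity directly via the affine-precomposition rule. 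The remaining assembly is routine: the feasible region is $\Delta_p \cap \{q : I_Q(Z;\hat{Y}) \le \epsilon_g\} \cap \{q : I_Q(Z;\hat{Y}|S) \le \epsilon_l\}$, an intersection of convex sets hence convex, and we are minimizing a linear objective over it, so the AGLFOP is a convex optimization problem. The one place to be careful is boundary behavior where some $p_Z(z)$ or $\Pr(S{=}s)$ is zero: such $(z,s)$ contribute nothing and can be dropped from the alphabet at the outset, so we may assume all fixed marginals are strictly positive and the conditionals are well-defined affine functions of $q$.
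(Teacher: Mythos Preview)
Your proposal is correct and follows essentially the same approach as the paper: both hinge on the observation that fixing the $(Z,S,Y)$-marginal on $\Delta_p$ makes the relevant conditionals $Q(\hat{y}\mid z)$ and $Q(\hat{y}\mid z,s)$ affine in the joint, so that convexity of mutual information in the channel yields convexity in $Q$. The only cosmetic difference is packaging—you invoke joint convexity of $\KL$ plus affine precomposition, while the paper unrolls the same step via the log-sum inequality—and your explicit handling of zero-marginal boundary cases is a nice touch the paper omits.
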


The AGLFOP is a convex optimization problem (see proof in Appendix~\ref{apx:AccTrade}) that evaluates all potential joint distributions within the set $\Delta_p$ which includes the specific dataset and how they are distributed across clients. The true distribution of this given dataset across clients is  $\Pr(Z=z,S=s,Y=y)$. This makes it an appropriate framework for investigating the accuracy-fairness trade-off. The Pareto front of this optimization problem facilitates a detailed study of the trade-offs, showcasing the maximum accuracy that can be attained for a given global and local fairness relaxation $(\epsilon_g,\epsilon_l)$. 

The set $\Delta_p$ can be further restricted for specialized applications, e.g., to constrain to all derived classifiers from an optimal classifier~\citep{hardt2016equality}. We can restrict our optimization space \( \Delta_p \) to lie within the convex hull derived by the False Positive Rate (FPR) and True Positive Rate (TPR) of an initially trained classifier. This would characterize the accuracy-fairness tradeoff for all derived classifiers from the original trained classifier. The convex hall characterizes the distributions that can be achieved with any derived classifier. The constraints of AGLFOP can also be expressed using PID terms, offering intriguing insights that we will examine in the following section.  The AGLFOP can be computed in any FL environment. Specifically, their computation necessitates the characterization of the joint distribution $\Pr(Z{=}z,S{=}s,Y{=}y)= \Pr(S{=}s)\Pr(Z{=}z|S{=}s)\Pr(Y{=}y|Z{=}z,S{=}s)$, which can be readily acquired by aggregating pertinent statistics across all participating clients.  
\begin{remark}[Broader Potential]
The AGLFOP currently focuses on independence between $Z$ and $\hat{Y}$, but can be adapted to explore other fairness notions. It can also be used to study optimality in situations where different clients have varying fairness requirements, e.g., adhering to statistical parity globally while upholding equalized odds at the local level. Moreover, variants of this optimization problem can be developed to penalize only the worst-case client fairness scenarios.
\end{remark}

\begin{figure}[t]
\centering
\includegraphics[width=0.98\textwidth]{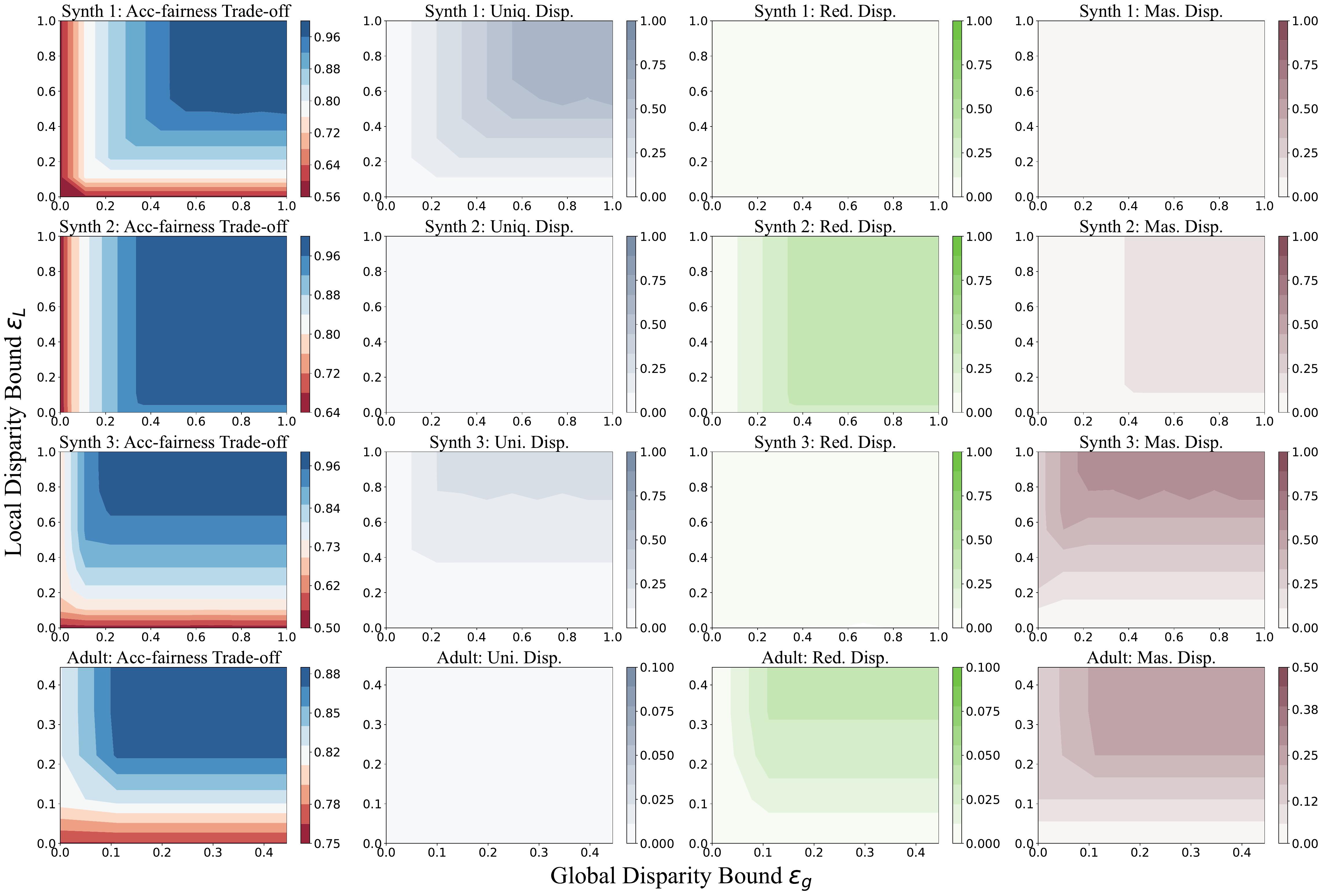} 
\caption{AGLFOP Pareto Frontiers for Synthetic and Adult Datasets with PID. (\textit{first column})  shows maximum accuracy $(1-err)$ that can be achieved on a dataset and client distribution for a given global and local fairness relaxation $(\epsilon_g,\epsilon_l)$. Synthetic data in scenario $1$ \textit{(first row)} is characterized by Unique Disparity. Local and global fairness agree, and accuracy trade-offs
are balanced between them. Synthetic data in scenario $2$ with $\alpha=0.9$ \textit{(second row)} is dominated by Redundant Disparity with trade-offs mainly between global fairness and accuracy (an accurate model could have zero Local  Disparity but be globally unfair).   Synthetic data in Scenario 3 \textit{(third row)} is characterized by Masked Disparity with trade-offs mainly between local fairness and accuracy (an accurate model could have zero Global Disparity but be locally unfair). Adult data with heterogeneous split \textit{(fourth row; details in Appendix~\ref{apx:exp})}, displaying predominantly Masked Disparity but notable presence of Redundant Disparity, capturing more complex relationships and trade-offs.}
\label{fig:afgop}
\end{figure}

\section{Experimental Demonstration}\label{sec:experiments}
In this section, we provide experimental evaluations on synthetic and real-world datasets to validate our theoretical findings.\footnote{Implementation is available at \url{https://github.com/FaisalHamman/FairFL-PID}} We investigate the PID of Global and Local  Disparity under various conditions. We also examine the trade-offs between these fairness metrics and model accuracy.

\noindent \textbf{Data and Client Distribution.} We consider the following: \emph{(1) Synthetic dataset:} A $2$-D feature vector \( X {=} (X_0, X_1) \) has a distribution, i.e.,  \( X|_{Y=1} {\sim} \mathcal{N}((2, 2), \left[\begin{smallmatrix} 5 & 1 \\ 1 & 5 \end{smallmatrix}\right]) \),  \( X|_{Y=0} {\sim} \mathcal{N}((-2, -2), \left[\begin{smallmatrix} 10 & 1 \\ 1 & 3 \end{smallmatrix}\right]) \). Assume binary sensitive attribute \( Z {=} 1 \) if \( X_0 {>} 0 \) and $0$ otherwise to encode a dependence; and  \textit{(2) Adult dataset}~\citep{Adult} with gender as sensitive attribute.
We consider three cases for partitioning the datasets across clients: (\emph{Scenario} 1) sensitive-attribute independently distributed across clients, i.e., \(Z \indep S\), (\emph{Scenario} 2) high sensitive-attribute heterogeneity across clients, i.e., $Z=S$ with probability $\alpha$, and (\emph{Scenario} 3) high sensitive-attribute synergy level across clients, i.e., \(Y = Z \oplus S\). Further details are described in Appendix~\ref{apx:exp}.

\textbf{Experiment A:  Accuracy-Global-Local-Fairness Trade-off Pareto Front.} To study the trade-offs between model accuracy and different fairness constraints, we plot the Pareto frontiers for the AGLFOP. We solve for maximum accuracy $(1-err)$ while varying global and local fairness relaxations $(\epsilon_g,\epsilon_l)$. We present results for synthetic and Adult datasets as well as PID terms for various data splitting scenarios across clients.\footnote{We use Python \textit{dit} package~\citep{dit} for PID computation and \textit{cvxpy} for convex solvers.} The three-way trade-off among accuracy, global, and local fairness can be visualized as a contour plot (see Fig.~\ref{fig:afgop}).

Interestingly, PID allows us to quantify the agreement and disagreement between local and global fairness. In scenarios characterized by Unique Disparity, local and global fairness agree, and accuracy trade-offs are balanced between them. In cases characterized by Redundant Disparity, the trade-off is primarily between accuracy and global fairness (the accuracy changes along the horizontal axis ($\epsilon_l$)  seemingly nonexistent given ($\epsilon_g$)). In contrast, scenarios with Masked Disparity exhibit a trade-off that is primarily between accuracy and local fairness (the trade-off is across the vertical axis).

\textbf{Experiment B: Demonstrating Disparities in Federated Learning Settings.} In this experiment, we investigate the PID of disparities on the Adult dataset trained within a FL framework. We employ the \emph{FedAvg} algorithm~\citep{Fedavg} for training and analyze:

\begin{itemize}[leftmargin=*, topsep=0pt, itemsep=0pt]
\item \textit{PID Across Various Splitting Scenarios.} We partition the dataset among clients based on Scenarios 1-3, utilize FedAvg for model training in each case, and examine the PID of both Local and Global Disparities (see Fig.~\ref{fig:combined_main}). For each scenario, we also evaluate the effects of using a local disparity mitigation technique. This is achieved by incorporating a statistical parity regularizer at each client. The results and implementation details are presented in Table~\ref{tbl:faireg} in Appendix~\ref{apx:expB}.

\item \textit{PID Under Varying Sensitive Attribute Heterogeneity  Level.}
 We partition the dataset across two clients with varying levels of sensitive attribute heterogeneity. We use $\alpha = \Pr(Z=0|S=0)$ to control the sensitive attribute heterogeneity level across clients. Our results are summarized in Fig.~\ref{fig:combined_main} and Table~\ref{apx:alpha} in Appendix~\ref{apx:expB}.

\item \textit{Observing Levels of Masked Disparity.} We partition the dataset with varying sensitive attribute synergy levels across clients to study the impact on the Masked Disparity. The \emph{synergy level} $\lambda \in [0,1]$ measures how closely the true label $Y$ aligns with  $Z\oplus S$ (see Definition~\ref{def:synergylevel} in Appendix~\ref{apx:expB}). Results are summarized in Fig.~\ref{fig:combined_main} and  Table~\ref{apx:syn} in Appendix~\ref{apx:expB}.

\begin{figure}[t]
\centering
\includegraphics[width=0.9\textwidth]{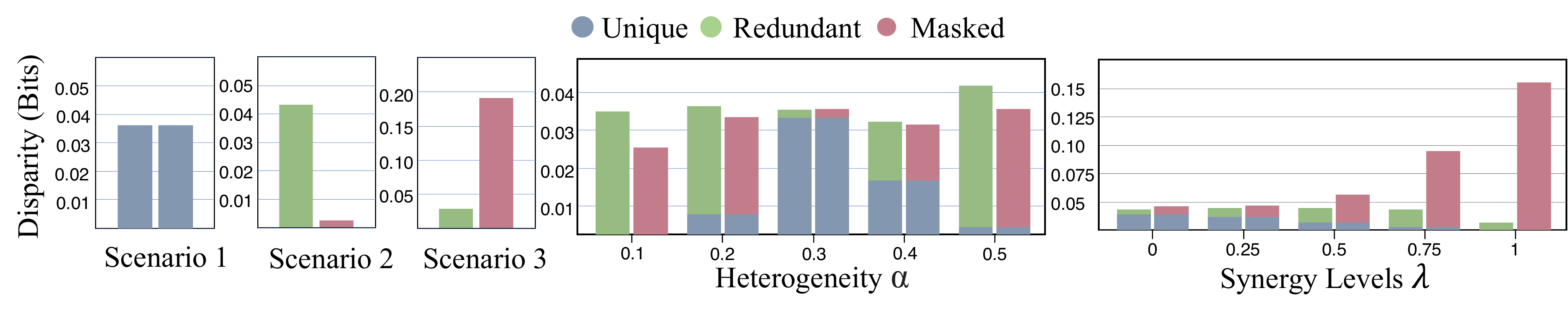}
    \caption{ \textit{(left)} Plot demonstrating scenarios with Unique, Redundant, and Masked Disparities for the Adult dataset (model trained using \emph{FedAvg}). Unique Disparity when sensitive attributes are equally distributed across clients. Redundant Disparity when there is a dependency between clients and sensitive attributes (scenario 2; $\alpha=0.9$). Masked Disparity is dominant with high sensitive attribute synergy level across clients. \textit{(middle)} Illustrates PID for varying levels of sensitive attribute heterogeneity ($\alpha$; see details in Appendix~\ref{apx:expB}). When $\alpha$ is close to 0.3, the data is split evenly across clients (note $\Pr(Z{=}0) {=} 0.33$ for the Adult dataset), resulting in a higher level of Unique Disparity. As $\alpha$ deviates from 0.3, i.e., higher dependency between $Z$ and $S$, the Unique Disparity decreases while Redundant and Masked Disparity increases. \textit{(right)} Illustrates relationship between the synergy level ($\lambda$; see details in Appendix~\ref{apx:expB}) and global and local fairness. As the synergy level increases, the Masked and Local  Disparity increases as expected.}
    \label{fig:combined_main}
\end{figure}

\item \textit{Experiments Involving Multiple Clients.} We experiment with multiple clients \( K = 5 \) and \( K = 10 \). Our findings are presented in Fig.~\ref{fig:10clients}, Fig.~\ref{fig:5clients} and Table~\ref{tbl:10clients} in Appendix ~\ref{apx:expB}.

\end{itemize}
\section{Discussion} Our information-theoretic framework provides a nuanced understanding of the sources of disparity in FL, namely, Unique, Redundant, and Masked disparity. Our experiments offer insights into the agreement and disagreement between local and global fairness under various data distributions. Our experiments and theoretical results show that depending on the data distribution achieving one can often come at the cost of the other (disagreement). The nature of the data distribution across clients significantly impacts the disparity that dominates. Our optimization framework establishes the accuracy-fairness tradeoffs for a dataset and client distribution.

Importantly, our research can: \emph{(i)} inform the use of local disparity mitigation techniques and their convergence and effectiveness when deployed in practice; and \emph{(ii)} also serve as a valuable tool for policy decision-making, shedding light on the effects of model bias at both the global and local levels. This is particularly relevant in the expanding field of algorithmic fairness auditing~\citep{hamman2023can,yan2022active}. Future studies could also investigate how this approach could be extended to more sophisticated fairness measures. The estimation of PID terms largely depends on \emph{(i)} the empirical estimators of the probability distributions; and \emph{(ii)} the efficiency of the convex optimization algorithm used for calculating the unique information. As the number of clients or sensitive attributes increases, the computational cost may rise accordingly. Future work could explore alternative efficient PID computation techniques~\citep{belghazi2018mine, venkatesh2022partial,pakman2021estimating, kleinman2021redundant}.

\bibliography{example_paper}
\bibliographystyle{iclr2024_conference}

\appendix

\section{Relevant Background on Information Theoretic Measures}

We outline key information-theoretic measures pertinent to this paper's discussions.

\begin{definition}[Entropy]
Entropy quantifies the uncertainty or unpredictability of a random variable \(Z\). It is mathematically defined by the equation:
\begin{equation}
    H(Z) = -\sum_{z} \Pr(Z=z) \log \Pr(Z=z).
\end{equation}
\end{definition}
\begin{definition}[Mutual Information]
Mutual Information, $\mut{Z}{\hat{Y}}$, quantifies the amount of information obtained about random variable $Z$ through $\hat{Y}$. Specifically, it measures the degree of dependence between two variables, \( Z \) and \( \hat{Y} \), capturing both linear and non-linear dependencies:
\begin{equation}
    \mut{Z}{\hat{Y}} = \sum_{z, \hat{y}} \Pr(Z= z, \hat{Y} = \hat{y}) \log \frac{\Pr(Z = z, \hat{Y} = \hat{y})}{\Pr(Z = z)\Pr(\hat{Y} =\hat{y})}.
\end{equation}
\end{definition}

\begin{definition}[Conditional Mutual Information]\label{def:condmut}
The conditional mutual information, \( I(Z; \hat{Y} | S) \), measures the dependency between \( Z \) and \( \hat{Y} \), conditioned on \( S \):
\begin{equation}
    \mut{Z}{\hat{Y} | S} = \sum_{s, z , \hat{y}} \Pr(S=s, Z=z, \hat{Y} = \hat{y}) \log \frac{\Pr(Z=z, \hat{Y} = \hat{y} | S=s)}{\Pr(Z=z | S=s)\Pr(\hat{Y} = \hat{y} | S = s)},
\end{equation}
or alternatively,
\begin{equation}
    \mut{Z}{\hat{Y} | S} = \sum_{s} \Pr(S = s) \mut{Z}{\hat{Y} | S=s}.
\end{equation}
\end{definition}

\textbf{Mutual Information as a Measure of Fairness.} Mutual information can be used as a measure of the unfairness or disparity of a model. 
Mutual Information has been interpreted as the dependence between sensitive attribute \( Z \) and model prediction \( \hat{Y} \) (captures correlation as well as all non-linear dependencies). Mutual information is zero if and only if $Z$ and $\hat{Y}$ are independent. This means that if the model’s predictions are highly correlated with sensitive attributes (like gender or race), that’s a sign of unfairness. Mutual information has been explored in fairness in the context of centralized machine learning in \cite{kamishima2011fairness,cho2020fair,kang2021multifair}. 

In recent work, \cite{venkatesh2021can}  provides another interpretation of mutual information $\mut{Z}{\hat{Y}}$ in fairness as the accuracy of predicting \( Z \) from \( \hat{Y} \) (or the expected probability of error in correctly guessing $Z$ from $\hat{Y}$) from Fano’s inequality. Even in information bottleneck literature, mutual information has been interpreted as a measure of how well one random variable predicts (or, aligns with) the other~\citep{goldfeld2020information}. 

For local fairness, we are interested in the dependence between model prediction \( \hat{Y} \) and sensitive attributes \( Z \) at each and every client, i.e., the dependence between \( \hat{Y} \) and \( Z \) conditioned on the client S. For example, the disparity at client $S=1$  is $\mut{Z}{\hat{Y}|S=1}$ (the mutual information (dependence) between model prediction and sensitive attribute conditioned on client $S=1$ (considering data at client $S=1$). Our measure for Local  Disparity is the conditional mutual information (dependence) between $Z$ and $\hat{Y}$ conditioned on $S$, denoted as $\mut{Z}{\hat{Y}|S}$. Local disparity $\mut{Z}{\hat{Y}|S}=\sum_s p(s)I(Z; \hat{Y}|S=s)$, is an average of the disparity at each client weighted by the $p(s)$, the proportion of data at client $S=s$.  The Local Disparity is zero if and only if all client has zero disparity in their local dataset.

\section{Additional Results and Proofs for Section~\ref{main}}\label{APx:prem}

\SPMI*
\begin{proof}
Mutual information can be expressed as KL divergence:
\begin{equation}
\mathrm{I}(Z;\hat{Y})= D_{KL}\left(\Pr(\hat{Y},Z)||\Pr(\hat{Y})\Pr(Z)\right).
\end{equation}
Using Pinsker's Inequality \citep{canonne2022short},
\begin{equation}
d_{TV}(Q_1,Q_2) \leq  \sqrt{0.5 D_{KL}(Q_1||Q_2)}
\end{equation}
where, $d_{TV}(Q_1,Q_2)$ is the total variation between two probability distributions $Q_1,Q_2$. 
\begin{align}
& d_{TV}\left(\Pr(\hat{Y},Z),\Pr(\hat{Y})\Pr(Z)\right)  = \frac{1}{2} \sum_{\hat{y},z}  \left|\Pr(\hat{Y} = \hat{y},Z= z)-\Pr(\hat{Y}= \hat{y})\Pr(Z=z)\right| \nonumber \\ 
& =  \sum_{z} \Pr(Z=z) \sum_{\hat{y},z} \frac{1}{2} \left|\Pr(\hat{Y}= \hat{y} |Z=z)-\Pr(\hat{Y}=\hat{y})\right|\nonumber \\
    & = \frac{1}{2} \Pr(Z=1) \left[ | \Pr(\hat{Y}=1|Z=1)- \Pr(\hat{Y}=1)| + | \Pr(\hat{Y}=0|Z=1)- \Pr(\hat{Y}=0)| \right]\nonumber\\  & +\frac{1}{2} \Pr(Z=0) \left[ | \Pr(\hat{Y}=1|Z=0)- \Pr(\hat{Y}=1)| + | \Pr(\hat{Y}=0|Z=0) - \Pr(\hat{Y}=0)| \right] \nonumber\\
    & = \frac{1}{2} \alpha (1-\alpha) |SP1| + \frac{1}{2} \alpha (1-\alpha) |SP0| + \frac{1}{2} \alpha (1-\alpha) |SP1| +\frac{1}{2} \alpha (1-\alpha) |SP0| \nonumber\\
    & =  \alpha (1-\alpha) |SP1|+ \alpha (1-\alpha) |SP0| 
    \label{SPi}
\end{align}
where $\Pr(Z=0)=1-\Pr(Z=1)=\alpha$, and 

$SPi= \Pr(\hat{Y}=i|Z=1)- \Pr(\hat{Y}=i|Z=0) = \Pr(\hat{Y}=i|Z=1)- \Pr(\hat{Y}=i)$.

To complete the proof, we show:
\begin{align}
   SP1 = & \Pr(\hat{Y}=1|Z=1)-\Pr(\hat{Y}=1) \nonumber \\
        = &  \Pr(\hat{Y}=1|Z=1)-\left( 1-\Pr(\hat{Y}=0) \right)\nonumber \\
        = &  -1+\Pr(\hat{Y}=1|Z=1) +\Pr(\hat{Y}=0)  \nonumber\\
        = & -\Pr(\hat{Y}=0|Z=1)+\Pr(\hat{Y}=0)  = -SP0. \nonumber
\end{align}
Hence, $|SP1|=|SP0|$. From \Eqref{SPi} we have:
$ 2\alpha (1-\alpha)|SP_1| \leq \sqrt{0.5 \mut{Z}{\hat{Y}}}. $
\begin{remark}[Tightness of Lemma \ref{lem:SP_Mut}]
 Since our proof exclusively utilizes Pinsker's inequality, their tightness is equivalent. Given \( \mut{Z}{\hat{Y}} \leq \min \{ H(Z), H(\hat{Y})\} \leq H(\hat{Y})\) and \( H(Y) \leq 1 \) in binary classification. Hence, \( \mut{Z}{\hat{Y}} \leq 1 \)  which is aligned with the known tight regime of Pinsker's inequality (i.e., $D_{KL}(P||Q) \leq 1$) \citep{canonne2022short}. The inequality is tighter with smaller mutual information $\mut{Z}{\hat{Y}}$ values.
\end{remark}
\end{proof}

\locsp*

\begin{proof}
We aim to establish that \(\mut{Z}{\hat{Y}|S}=0\) if and only if \(\Pr(\hat{Y}=1|Z=1,S=s) = \Pr(\hat{Y}=1|Z=0,S=s)\) for all clients \(s\). For brevity, we denote $\Pr(Z=z, S=s, Y=y) = p(z, s, y)$.

\textit{Forward Direction}:
Assume \(\mut{Z}{\hat{Y}|S} = 0\).

From Definition~\ref{def:condmut}, we have:
\[
\mut{Z}{\hat{Y}|S} =\sum_{s,z,\hat{y}} p(s,z,\hat{y}) \log \left(\frac{p(z, \hat{y}|s)}{p(z|s)p(\hat{y}|s)}\right) = 0.
\]
This implies that \(\log \left( \frac{p(z, \hat{y}|s)}{p(z|s)p(\hat{y}|s)} \right) = 0\) for all \(s, z, \hat{y}\), and consequently \(\frac{p(z, \hat{y}|s)}{p(z|s)p(\hat{y}|s)} = 1\)  \( \forall s\).

Observing that \(p(z, \hat{y}|s) = p(z|s)p(\hat{y}|z,s)\), we deduce that $\frac{p(z|s)p(\hat{y}|z,s)}{p(z|s)p(\hat{y}|s)} = 1.$

From this, it directly follows that \(p(\hat{y}|z,s) = p(\hat{y}|s)\), and thus \(\Pr(\hat{Y}=1|Z=1,S=s) = \Pr(\hat{Y}=1|Z=0,S=s)\).

\textit{Reverse Direction}:
Assume \(\Pr(\hat{Y}=1|Z=1,S=s) = \Pr(\hat{Y}=1|Z=0,S=s)\) for all \(s\).

This implies \(p(\hat{y}|s,z) = p(\hat{y}|s)\) for all \(s, z, \hat{y}\). Plugging this into the definition of conditional mutual information, we find
\(
\mut{Z}{\hat{Y}|S}= 0.
\)

Thus, both directions of the equivalence are proven, concluding the proof.
\end{proof}

\begin{corollary}\label{col:SPk_MI}
The statistical parity at each client $s$ can  be expressed as $$|SP_{s}| \leq \frac{\sqrt{0.5 \; \mathrm{I}(Z;\hat{Y}|S=s)}}{2 \alpha_s (1-\alpha_s)}$$ where, $ \alpha_s=\Pr(Z=0|S=s) = 1 - \Pr(Z=1|S=s) $.
\end{corollary}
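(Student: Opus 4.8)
The plan is to reduce Corollary~\ref{col:SPk_MI} to Lemma~\ref{lem:SP_Mut} by working inside a single client. The key observation is that Lemma~\ref{lem:SP_Mut}, despite being stated for the unconditional distribution of $(Z,\hat{Y})$, is really a statement about any joint distribution of two binary variables: if $W,V$ are binary with $\Pr(W{=}0)=\alpha$, then $|\Pr(V{=}1|W{=}1)-\Pr(V{=}1|W{=}0)| \leq \frac{\sqrt{0.5\,\mut{W}{V}}}{2\alpha(1-\alpha)}$. Nothing in the proof of Lemma~\ref{lem:SP_Mut} used anything beyond binariness and the definition of mutual information, so this "relativized" version holds verbatim.

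The main step is therefore: condition everything on the event $S=s$. Apply the relativized Lemma to the conditional distribution $\Pr(\cdot\mid S{=}s)$ of the pair $(Z,\hat{Y})$. Under this conditional law, the role of $\alpha$ is played by $\alpha_s = \Pr(Z{=}0\mid S{=}s)$, the statistical parity gap becomes $SP_s = \Pr(\hat{Y}{=}1\mid Z{=}1,S{=}s) - \Pr(\hat{Y}{=}1\mid Z{=}0,S{=}s)$, and the mutual information becomes exactly the per-client conditional mutual information $\mathrm{I}(Z;\hat{Y}\mid S{=}s)$ as defined in Definition~\ref{def:condmut} (the summand inside $\mut{Z}{\hat{Y}|S} = \sum_s \Pr(S{=}s)\,\mathrm{I}(Z;\hat{Y}\mid S{=}s)$). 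Substituting these into the relativized inequality yields $|SP_s| \leq \frac{\sqrt{0.5\,\mathrm{I}(Z;\hat{Y}|S{=}s)}}{2\alpha_s(1-\alpha_s)}$, which is the claim.

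Concretely I would organize the proof as follows. First, recall that $\mut{Z}{\hat{Y}|S{=}s} = D_{KL}\!\left(\Pr(\hat{Y},Z\mid S{=}s)\,\|\,\Pr(\hat{Y}\mid S{=}s)\Pr(Z\mid S{=}s)\right)$, which is just the definition of mutual information for the conditional distribution given $S{=}s$. Second, apply Pinsker's inequality to bound the total variation distance between $\Pr(\hat{Y},Z\mid S{=}s)$ and the product $\Pr(\hat{Y}\mid S{=}s)\Pr(Z\mid S{=}s)$ by $\sqrt{0.5\,\mathrm{I}(Z;\hat{Y}|S{=}s)}$. Third, expand that total variation distance exactly as in the proof of Lemma~\ref{lem:SP_Mut} (Equation~\ref{SPi}), now with all probabilities conditioned on $S{=}s$, obtaining $2\alpha_s(1-\alpha_s)|SP_s|$; the identity $|SP1|=|SP0|$ carries over without change since it only uses that $\hat{Y}$ is binary. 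Rearranging gives the stated bound.

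The only mild obstacle is bookkeeping: one must be careful that the steps in Lemma~\ref{lem:SP_Mut}'s proof truly only invoke properties of the joint law of two binary variables (they do), so that the substitution $\alpha \mapsto \alpha_s$, unconditional probabilities $\mapsto$ conditional-on-$S{=}s$ probabilities, and $\mut{Z}{\hat{Y}}\mapsto \mathrm{I}(Z;\hat{Y}|S{=}s)$ is legitimate. There is no genuine difficulty here — the corollary is essentially a corollary in the literal sense, obtained by running the earlier argument verbatim inside each client's conditional distribution. Edge cases where $\alpha_s \in \{0,1\}$ (a client with only one sensitive group) make the right-hand side vacuously infinite, so the bound holds trivially there.
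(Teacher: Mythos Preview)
Your proposal is correct and matches the paper's intended approach: the paper states Corollary~\ref{col:SPk_MI} without a separate proof, remarking only that ``similar to Lemma~\ref{lem:SP_Mut}'' one obtains the per-client bound, which is exactly your strategy of rerunning the Pinsker-based argument of Lemma~\ref{lem:SP_Mut} under the conditional distribution $\Pr(\cdot\mid S{=}s)$ with $\alpha$ replaced by $\alpha_s$.
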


\begin{definition}[Difference Between Local and Global Disparity]\label{def:interaction}
The difference between Global and Local  Disparity is: $\mut{Z}{\hat{Y}}-\mut{Z}{\hat{Y}|S}= \mut{Z}{\hat{Y};S}$.
This term is the ``interaction information,'' which, unlike other mutual-information-based measures, can be positive or negative.
\end{definition}

Interaction information quantifies the redundancy and synergy present in a system. In FL, positive interaction information indicates a system with high levels of redundancy and Global Disparity, while negative interaction information indicates a system with high levels of synergy and Local  Disparity. Interaction information can inform the trade-off between Local and Global Disparity.

\section{Proof for Section \ref{sec: PID_Fed}}\label{apx:PID}

\disparitydecomp*
\begin{proof}
Equation \ref{eqn:gblPID} follows directly from the PID terms definition.
\begin{align*}
&\uni{Z}{\hat{Y}|S} + \rd{Z}{\hat{Y}, S} {=}\min _{Q \in \Delta_p} \mathrm{I}_Q(Z ; \hat{Y} | S) + \mut{Z}{\hat{Y}}- \min _{Q \in \Delta_p} \mathrm{I}_Q(Z ; \hat{Y} | S) =\mut{Z}{\hat{Y}}. 
\end{align*} 
 Equation \ref{eq:locPID} follows from PID terms definition and the chain rule of mutual information.
\begin{align}
 \uni{Z}{\hat{Y}|S} + \syn{Z}{\hat{Y},S} & = \min _{Q \in \Delta_p} \mathrm{I}_Q(Z ;\hat{Y}|S) + \mathrm{I}(Z; \hat{Y}, S) -\mut{Z}{S} - \min _{Q \in \Delta_p} \mathrm{I}_Q(Z ; \hat{Y} | S) \nonumber \\
 & = \mathrm{I}(Z; S) + \mut{Z}{\hat{Y}|S} -\mut{Z}{S}\nonumber\\
 & = \mut{Z}{\hat{Y}|S}. \nonumber
\end{align} 

Now, we prove the non-negativity property of PID decomposition.

$\uni{Z}{\hat{Y}|S} = \min_{Q \in \Delta_p} \mutd{Q}{Z}{\hat{Y}|S}$ is non-negative since the conditional mutual information is non-negative by definition.

$\syn{Z}{\hat{Y},S} = \mut{Z}{\hat{Y}|S} - \min_{Q \in \Delta_p} \mutd{Q}{Z}{\hat{Y}|S} 
    \geq \mut{Z}{\hat{Y}|S} - \mut{Z}{\hat{Y}|S} 
    = 0$

The  Redundant Disparity: 
\begin{align*}
    \rd{Z}{\hat{Y},S} &= \mut{Z}{\hat{Y}}- \min _{Q \in \Delta_p} \mathrm{I}_Q(Z ; \hat{Y} | S) = \max_{Q \in \Delta_p} \mutd{Q}{\hat{Y}}{Z} - \mutd{Q}{Z} {\hat{Y}|S}
\end{align*}
First equality holds by definition. Second equality holds  since   marginals on $(\hat{Y},Z)$ is fixed in $\Delta_p$, hence, $\max_{Q \in \Delta_p} \mutd{Q}{\hat{Y}}{Z} = \mut{\hat{Y}}{Z} $.

To prove non-negativity of redundant disparity, we construct a distribution $Q_0$ such that:
\begin{align*}
    \Pr_{Q_0}(Z=z,\hat{Y}=y,S=s) = \frac{\Pr(Z=z,\hat{Y}=y)\Pr(Z=z,S=s)}{\Pr(Z=z)} 
\end{align*}

Next, we show $Q_0 \in \Delta_p$. Recall the set \(\Delta_p\) in Definition \ref{def:bert_def}:
\begin{align*}
    \Delta_p = \{Q \in \Delta: \Pr_{Q}(Z=z,\hat{Y}=y) = \Pr(Z=z, \hat{Y}=y), \Pr_{Q}(Z=z, S=s) = \Pr(Z=z, S=s)\}.
\end{align*}
\begin{align*}
    \Pr_{Q_0}(Z=z,\hat{Y}=y) &=  \sum_s \Pr_{Q_0}(Z=z,\hat{Y}=y,S=s)=  \sum_s \frac{\Pr(Z=z,\hat{Y}=y)}{\Pr(Z=z)}\Pr(Z=z,S=s) \\
    &= \frac{\Pr(Z=z,\hat{Y}=y)}{\Pr(Z=z)}   \sum_s \Pr(Z=z,S=s) = \Pr(Z=z,\hat{Y}=y).
\end{align*}
\begin{align*}
    \Pr_{Q_0}(Z=z,S=s) &=  \sum_{\hat{y}} \Pr_{Q_0}(Z=z,\hat{Y}=y,S=s)=  \sum_{\hat{y}} \frac{\Pr(Z=z,\hat{Y}=y)\Pr(Z=z,S=s)}{\Pr(Z=z)} \\
    &= \frac{\Pr(Z=z,S=s)}{\Pr(Z=z)}   \sum_{\hat{y}} \Pr(Z=z,\hat{Y}=y) = \Pr(Z=z,S=s). 
\end{align*}

Marginals of $Q_0$ satisfy conditions on set $\Delta_p$, hence $Q_0\in \Delta_p$. Also, note that by construction of $Q_0$, $\hat{Y}$ and $S$ are independent conditioned on $Z$, i.e., $\mutd{Q_0}{\hat{Y}}{S|Z} = 0$. Hence, we have:
\begin{align*}
    \rd{Z}{\hat{Y},S} & \stackrel{(a)}{=} \max_{Q \in \Delta_p} \mutd{Q}{Z}{\hat{Y}} - \mutd{Q}{Z}{\hat{Y}|S} \\
    & \stackrel{(b)}{\geq} \mutd{Q_0}{Z}{\hat{Y}} - \mutd{Q_0}{Z}{\hat{Y}|S}\\
    & \stackrel{(c)}{=}  H_{Q_0}(Z)+H_{Q_0}(\hat{Y}) - H_{Q_0}(Z,\hat{Y}) - H_{Q_0}(Z|S) - H_{Q_0}(\hat{Y}|S)+H_{Q_0}(Z,\hat{Y}|S)\\
    & \stackrel{(d)}{=} \mutd{Q_0}{\hat{Y}}{S} - \mutd{Q_0}{\hat{Y}}{S|Z} \\
    &\stackrel{(e)}{=} \mutd{Q_0}{\hat{Y}}{S} \stackrel{(f)}{\geq} 0.
\end{align*}
Here, (\textit{a}) hold from definition of $\rd{Z}{\hat{Y},S}$, (\textit{b}) hold since $Q_0 \in \Delta_p$, (\textit{c})-(\textit{d}) holds from expressing mutual information in terms of entropy, (\textit{e}) hold since $\mutd{Q_0}{\hat{Y}}{S|Z} = 0$, (\textit{f}) holds from non-negativity property of mutual information.

\end{proof}

\section{Additional Results and Proofs for Section~\ref{sec:Tradeoff}}
\label{app:main}

\locnotglob*
\begin{proof}

For completeness, we have provided a detailed proof that demonstrates the non-negativity property of the terms involved.

$\uni{Z}{\hat{Y}|S} = \min_{Q \in \Delta_p} \mutd{Q}{Z}{\hat{Y}|S}$ is non-negative since the conditional mutual information is non-negative by definition.

$\syn{Z}{\hat{Y},S} = \mut{Z}{\hat{Y}|S} - \min_{Q \in \Delta_p} \mutd{Q}{Z}{\hat{Y}|S} 
    \geq \mut{Z}{\hat{Y}|S} - \mut{Z}{\hat{Y}|S} 
    = 0.$

The  Redundant Disparity: 
\begin{align*}
    \rd{Z}{\hat{Y},S} &= \mut{Z}{\hat{Y}}- \min _{Q \in \Delta_p} \mathrm{I}_Q(Z ; \hat{Y} | S) = \max_{Q \in \Delta_p} \mutd{Q}{\hat{Y}}{Z} - \mutd{Q}{Z} {\hat{Y}|S}
\end{align*}
First equality holds by definition. Second equality holds  since   marginals on $(\hat{Y},Z)$ is fixed in $\Delta_p$, hence, $\max_{Q \in \Delta_p} \mutd{Q}{\hat{Y}}{Z} = \mut{\hat{Y}}{Z} $.

To prove non-negativity of redundant disparity, we construct a distribution $Q_0$ such that:
\begin{align*}
    \Pr_{Q_0}(Z=z,\hat{Y}=y,S=s) = \frac{\Pr(Z=z,\hat{Y}=y)\Pr(Z=z,S=s)}{\Pr(Z=z)} 
\end{align*}

Next, we show $Q_0 \in \Delta_p$. Recall the set \(\Delta_p\) in Definition \ref{def:bert_def}:
\begin{align*}
    \Delta_p = \{Q \in \Delta: \Pr_{Q}(Z=z,\hat{Y}=y) = \Pr(Z=z, \hat{Y}=y), \Pr_{Q}(Z=z, S=s) = \Pr(Z=z, S=s)\}.
\end{align*}

\begin{align*}
    \Pr_{Q_0}(Z=z,\hat{Y}=y) &=  \sum_s \Pr_{Q_0}(Z=z,\hat{Y}=y,S=s)=  \sum_s \frac{\Pr(Z=z,\hat{Y}=y)}{\Pr(Z=z)}\Pr(Z=z,S=s) \\
    &= \frac{\Pr(Z=z,\hat{Y}=y)}{\Pr(Z=z)}   \sum_s \Pr(Z=z,S=s) = \Pr(Z=z,\hat{Y}=y).
\end{align*}
\begin{align*}
    \Pr_{Q_0}(Z=z,S=s) &=  \sum_{\hat{y}} \Pr_{Q_0}(Z=z,\hat{Y}=y,S=s)=  \sum_{\hat{y}} \frac{\Pr(Z=z,\hat{Y}=y)\Pr(Z=z,S=s)}{\Pr(Z=z)} \\
    &= \frac{\Pr(Z=z,S=s)}{\Pr(Z=z)}   \sum_{\hat{y}} \Pr(Z=z,\hat{Y}=y) = \Pr(Z=z,S=s). 
\end{align*}

Marginals of $Q_0$ satisfy conditions on set $\Delta_p$, hence $Q_0\in \Delta_p$. Also, note that by construction of $Q_0$, $\hat{Y}$ and $S$ are independent conditioned on $Z$, i.e., $\mutd{Q_0}{\hat{Y}}{S|Z} = 0$. Hence, we have:
\begin{align*}
    \rd{Z}{\hat{Y},S} & \stackrel{(a)}{=} \max_{Q \in \Delta_p} \mutd{Q}{Z}{\hat{Y}} - \mutd{Q}{Z}{\hat{Y}|S} \\
    & \stackrel{(b)}{\geq} \mutd{Q_0}{Z}{\hat{Y}} - \mutd{Q_0}{Z}{\hat{Y}|S}\\
    & \stackrel{(c)}{=}  H_{Q_0}(Z)+H_{Q_0}(\hat{Y}) - H_{Q_0}(Z,\hat{Y}) - H_{Q_0}(Z|S) - H_{Q_0}(\hat{Y}|S)+H_{Q_0}(Z,\hat{Y}|S)\\
    & \stackrel{(d)}{=} \mutd{Q_0}{\hat{Y}}{S} - \mutd{Q_0}{\hat{Y}}{S|Z} \\
    &\stackrel{(e)}{=} \mutd{Q_0}{\hat{Y}}{S} \stackrel{(f)}{\geq} 0.
\end{align*}
Here, (\textit{a}) hold from definition of $\rd{Z}{\hat{Y},S}$, (\textit{b}) hold since $Q_0 \in \Delta_p$, (\textit{c})-(\textit{d}) holds from expressing mutual information in terms of entropy, (\textit{e}) hold since $\mutd{Q_0}{\hat{Y}}{S|Z} = 0$, (\textit{f}) holds from non-negativity property of mutual information.

Hence, from proposition \ref{prop:decomposition}, we prove Theorem \ref{thm: imp}.

As Local  Disparity \(\mut{Z}{\hat{Y}|S} \rightarrow 0\), then \(\uni{Z}{\hat{Y}|S} \rightarrow 0\) and \(\syn{Z}{\hat{Y},S} \rightarrow 0\), therefore the Global Disparity \(\mut{Z}{\hat{Y}} \rightarrow \rd{Z}{\hat{Y},S}\geq 0\).
\end{proof}

\globnotloc*
\begin{proof}

Proof requires the non-negativity property of PID terms (follows similarly from proof of Theorem \ref{thm: imp}). The argument then goes as follows:

    As Global Disparity \(\mut{Z}{\hat{Y}} \rightarrow 0\), then \(\uni{Z}{\hat{Y}|S} \rightarrow 0\) and $\rd{Z}{\hat{Y},S}  \rightarrow 0$, therefore the Local  Disparity \(\mut{Z}{\hat{Y}|S} \rightarrow \syn{Z}{\hat{Y},S} \geq 0\).
\end{proof}

\NSC*
\begin{proof}

From the PID of Local and Global Disparity,
\begin{align*}
&\mut{Z}{\hat{Y}}=\uni{Z}{\hat{Y}|S} + \rd{Z}{\hat{Y}, S}, \label{eqn:gblPID}\\
&\mut{Z}{\hat{Y}|S}=\uni{Z}{\hat{Y}|S} + \syn{Z}{\hat{Y},S}. 
\end{align*}
Therefore if, $\mut{Z}{\hat{Y}|S}=0$, then $\uni{Z}{\hat{Y}|S}=0$.
Hence, $$\mut{Z}{\hat{Y}} = \rd{Z}{\hat{Y}, S}$$
$$\mut{Z}{\hat{Y}} = 0 \Longleftrightarrow \rd{Z}{\hat{Y}, S} = 0.$$

To prove the sufficient condition,  we leverage the PID of $\mut{Z}{S}$ and the non-negative property of the PID terms:\begin{align}
 \mut{Z}{S} & = \uni{Z}{S|\hat{Y}} +\rd{Z}{\hat{Y},S} \nonumber \\   
\mut{Z}{S} & \geq \rd{Z}{\hat{Y},S}. \nonumber
\end{align} Hence, $Z \indep S \implies \rd{Z}{\hat{Y},S}=0$. 
\end{proof}

\IYS*
\begin{proof}
    Interaction information expressed in PID terms (see Definition~\ref{def:interaction}):
 \begin{align}          
         \mut{Z}{\hat{Y};S} & =\mut{Z}{\hat{Y}}-\mut{Z}{\hat{Y}|S}\nonumber =\rd{Z}{\hat{Y},S}-\text{Syn}(Z;\hat{Y},S). \nonumber
    \end{align}
If Masked Disparity $\text{Syn}(Z;\hat{Y},S)= 0$, we have:
\begin{align}
    \mut{Z}{\hat{Y};S} & =\mut{Z}{\hat{Y}}-\mut{Z}{\hat{Y}|S} =\rd{Z}{\hat{Y},S} \geq 0 \nonumber
\end{align}
Since the interaction information is positive and symmetric,
\begin{align}
    \mut{\hat{Y}}{S} & \geq \mut{\hat{Y}}{S}-\mut{\hat{Y}}{S|Z} =\rd{Z}{\hat{Y},S}.\nonumber
\end{align}
Therefore, $\hat{Y} \indep S \implies \rd{Z}{\hat{Y},S}=0$.
\end{proof}

\markovsyn*
\begin{proof}
By leveraging the PID of $\mut{Z}{S|\hat{Y}}$,
\begin{align*}
    \mut{Z}{S|\hat{Y}} = \uni{Z}{S|\hat{Y}}+\syn{Z}{\hat{Y},S}.
\end{align*}
Markov chain $Z - \hat{Y} - S$ implies, $\mut{Z}{S|\hat{Y}} = 0 $. Hence, $ \syn{Z}{\hat{Y},S}=0$. 

Rest of proof follows from nonnegative property of PID terms: $$\mut{Z}{\hat{Y}|S}=\uni{Z}{\hat{Y}|S} \leq \uni{Z}{\hat{Y}|S} + \rd{Z}{\hat{Y}, S} = \mut{Z}{\hat{Y}}.$$
\end{proof}

\section{Proofs for Section \ref{sec:AccTrade}}\label{apx:AccTrade}

\begin{definition}[Convex Function]\label{def:conx_func}
    A function \( f: \mathbb{R}^n \to \mathbb{R} \) is said to be convex if, for all \( x_1, x_2 \in \mathbb{R}^n \) and for all \( \lambda \in [0, 1] \), the following inequality holds:
\begin{equation}
f(\lambda x_1 + (1-\lambda)x_2) \leq \lambda f(x_1) + (1-\lambda) f(x_2).
\end{equation}   
\end{definition}

\begin{lemma}
[Log Sum Inequality]\label{def:log_sum}
The \textit{log-sum inequality} states that for any two sequences of non-negative numbers \(a_1, a_2, \ldots, a_n\) and \(b_1, b_2, \ldots, b_n\), the following inequality holds:
\begin{equation}
\sum_{i=1}^{n} a_i \log \left( \frac{a_i}{b_i} \right) \geq \left( \sum_{i=1}^{n} a_i \right) \log \left( \frac{\sum_{i=1}^{n} a_i}{\sum_{i=1}^{n} b_i} \right).
\end{equation}   
\end{lemma}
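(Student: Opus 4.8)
The plan is to derive the log-sum inequality from the convexity of the scalar function $f(t) = t \log t$ on $(0,\infty)$, via Jensen's inequality. First I would verify that $f$ is convex: since $f''(t) = 1/t > 0$ for all $t > 0$, the function satisfies Definition~\ref{def:conx_func}, and the two-point inequality stated there extends by a routine induction on the number of points to the weighted $n$-point form $\sum_i \lambda_i f(t_i) \geq f\big(\sum_i \lambda_i t_i\big)$ for any $\lambda_i \geq 0$ with $\sum_i \lambda_i = 1$. This $n$-point Jensen inequality is the single analytic ingredient I would rely on.

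The key algebraic step is to rewrite each summand on the left as a scaled value of $f$. Setting $t_i = a_i/b_i$, I would observe that $a_i \log(a_i/b_i) = b_i \cdot t_i \log t_i = b_i f(t_i)$. Then, writing $b := \sum_i b_i$ and choosing the weights $\lambda_i := b_i / b$ (so that $\lambda_i \geq 0$ and $\sum_i \lambda_i = 1$), Jensen's inequality yields $\sum_i \lambda_i f(t_i) \geq f\big(\sum_i \lambda_i t_i\big)$. Computing the argument on the right, $\sum_i \lambda_i t_i = \frac{1}{b}\sum_i b_i (a_i/b_i) = \frac{1}{b}\sum_i a_i$, so the right-hand side equals $f\big((\sum_i a_i)/b\big) = \frac{\sum_i a_i}{b}\log\frac{\sum_i a_i}{\sum_i b_i}$. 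Multiplying both sides by $b = \sum_i b_i$ and substituting back $b_i f(t_i) = a_i\log(a_i/b_i)$ recovers exactly the claimed inequality.

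The main obstacle is the treatment of degenerate terms, since $f(t) = t\log t$ is a priori only defined for $t > 0$ and the statement allows zero entries. I would adopt the standard conventions $0\log 0 = 0$, $0\log(0/0) = 0$, and $a\log(a/0) = +\infty$ for $a > 0$, under which the inequality holds trivially whenever some $b_i = 0$ while $a_i > 0$. To justify that indices with $a_i = 0$ (or $b_i = 0$ with $a_i = 0$) do not disrupt the Jensen step, I would invoke the continuity of $f$ at the origin, namely $\lim_{t\to 0^+} t\log t = 0$, which makes the extension of $f$ by $f(0) = 0$ continuous and preserves convexity. I would also record the implicit hypothesis that $\sum_i b_i > 0$, needed for the right-hand side to be well-defined and for the weights $\lambda_i$ to exist; under these conventions the weighted-average argument goes through verbatim, completing the proof.
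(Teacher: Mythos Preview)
Your argument is correct and is the standard derivation of the log-sum inequality from Jensen's inequality applied to the convex function $f(t)=t\log t$; the handling of degenerate terms via the conventions $0\log 0=0$ and $a\log(a/0)=+\infty$ is also the usual one. Note, however, that the paper does not actually supply a proof of this lemma: it is stated as a known auxiliary result and then invoked in the convexity proof of the AGLFOP, so there is no ``paper's own proof'' to compare against. Your write-up would serve perfectly well as the missing justification.
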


\convexshow*
\begin{proof}
The set \( \Delta_p \) is a convex set, since for any two points \( Q_1, Q_2 \in \Delta_p \), their convex combination also lies in \( \Delta_p \) (probability simplex).
To prove AGFOP is a convex optimization problem, we show each term is convex in $Q$ using the definition of a convex function (see Definition~\ref{def:conx_func}). 

Let \( Q \in \Delta_p \) denote the joint distribution for $(Z,S,Y,\hat{Y})$. For brevity, we denote $\Pr(Z=z, S=s, Y=y) = p(z, s, y)$, the fixed marginals on $(Z,S,Y)$. Additionally, we denote $\Pr_{Q}(Z=z, S=s, Y=y,\hat{Y}=\hat{y}) = Q(z, s, y,\hat{y})$ as the probability under distribution Q.


To prove that \( err(Q) \) is convex, we need to show: $err(Q_\lambda) \leq \lambda err(Q_1) + (1-\lambda) err(Q_2)$, where \( Q_\lambda = \lambda Q_1 + (1-\lambda)Q_2 \), \( \forall Q_1, Q_2 \in \Delta_p \), and \( \lambda \in [0, 1] \).

We first express \( err(Q_\lambda) \) as:
\begin{align*}
err(Q_\lambda) &= \sum_{z,s,y,\hat{y}} Q_\lambda(z, s, y, \hat{y}) \cdot \mathbb{I}(y \neq \hat{y}) \\
&= \lambda \sum_{z,s,y,\hat{y}} Q_1(z, s, y, \hat{y}) \cdot \mathbb{I}(y \neq \hat{y}) + (1-\lambda) \sum_{z,s,y,\hat{y}} Q_2(z, s, y, \hat{y}) \cdot \mathbb{I}(y \neq \hat{y})\\
& = \lambda err(Q_1) + (1-\lambda) err(Q_2).
\end{align*}
Thus, \( err(Q) \) is convex as it satisfies the convexity condition.

To prove convexity of $\mutd{Q}{Z}{\hat{Y}}$, we show that \(\forall Q_1, Q_2 \in \Delta_p\) and  \(\forall \lambda \in [0,1]\), the following inequality holds: $
I_{ Q_\lambda}(Z:\hat{Y}) \leq \lambda I_{Q_1}(Z:\hat{Y}) + (1-\lambda)I_{Q_2}(Z:\hat{Y})
$.

Note that the marginals are convex in the joint distribution, i.e., 
$Q_\lambda(z, \hat{y}) = \sum_{s,y} Q_\lambda(z,s,y,\hat{y})$.

This is not necessarily true for conditionals. However, when \textit{some} marginals are fixed, convexity holds for \textit{some} conditionals, i.e., $Q_\lambda(\hat{y}|z) = \sum_{s,y} Q_\lambda(z,s,y,\hat{y})/ p(z)$.

Note that the conditional $Q_\lambda(\hat{y}|z)$ is convex in the joint distribution $Q_\lambda(z,s,y,\hat{y})$.
\begin{align*}
    Q_\lambda(\hat{y}|z) & = \sum_{s,y} Q_\lambda(z,s,y,\hat{y})/ p(z) 
     = \sum_{s,y} Q_\lambda(\hat{y}|z,s,y) p(s,y,z)/p(z)\\
    & = \sum_{s,y} Q_\lambda(\hat{y}|z,s,y) p(s,y|z).
\end{align*}
Hence, we can show convexity of $\mutd{Q}{Z}{\hat{Y}}$ in $Q_\lambda(\hat{y}|z)$:
\begin{align*}
\mutd{Q_\lambda}{Z}{\hat{Y}} &= \sum_{z,\hat{y}} Q_\lambda(z, \hat{y}) \log \left( \frac{Q_\lambda(z, \hat{y})}{Q_\lambda(z)Q_\lambda(\hat{y})} \right) \\
& = \sum_{z,\hat{y}} Q_\lambda(z)Q_\lambda(\hat{y}|z)  \log \left( \frac{Q_\lambda(\hat{y}|z)}{Q_\lambda(\hat{y})} \right) \\
& \stackrel{(a)}{=} \sum_{z,\hat{y}} p(z) (\lambda Q_1(\hat{y}|z) +(1-\lambda)Q_2(\hat{y}|z)  )\log \left( \frac{\lambda Q_1(\hat{y}|z) +(1-\lambda)Q_2(\hat{y}|z)}{\lambda Q_1(\hat{y})+ (1-\lambda)Q_2(\hat{y})} \right) \\
& \stackrel{(b)}{\leq} \lambda \sum_{z,\hat{y}} p(z)  Q_1(\hat{y}|z)\log \left( \frac{Q_1(\hat{y}|z)}{ Q_1(\hat{y})} \right)+(1-\lambda) \sum_{z,\hat{y}} p(z) Q_2(\hat{y}|z) \log \left( \frac{Q_2(\hat{y}|z)}{Q_2(\hat{y})} \right) \\
& = \lambda \mutd{Q_1}{Z}{\hat{Y}} + (1-\lambda)\mutd{Q_2}{Z}{\hat{Y}}.
\end{align*}
Here (\textit{a}) holds from expressing the linear combinations. Also note that, $Q_\lambda(\hat{y}) = \sum_z Q_\lambda(\hat{y}|z) p(z)$, which can also be expressed as a linear combination. The inequality (\textit{b}) holds from the log-sum inequality (see Lemma ~\ref{def:log_sum}).

To prove the convexity of \(\mathrm{I}_{Q}(Z;\hat{Y}|S)\), we show that \(\forall Q_1, Q_2 \in \Delta_p\) and  \(\forall \lambda \in [0,1]\), the following inequality holds: $
\mathrm{I}_{Q_\lambda}(Z;\hat{Y}|S) \leq \lambda \mathrm{I}_{Q_1}(Z;\hat{Y}|S) + (1-\lambda)\mathrm{I}_{Q_2}(Z;\hat{Y}|S).
$

Note that the conditional $Q_\lambda(\hat{y}|z, s)$ is convex in the joint distribution $Q_\lambda(z,s,y,\hat{y})$:
\begin{align*}
Q_\lambda(\hat{y}|z, s)  & = \sum_{y} Q_\lambda(\hat{y}, z, s, y)/ p(z, s) 
 = \sum_{y} Q_\lambda(\hat{y}|z, s, y) p(y, z, s)/ p(z, s) \\
& = \sum_{y} Q_\lambda(\hat{y}|z, s, y) p(y|z, s).
\end{align*}
Hence, we can show convexity of $\mutd{Q}{Z}{\hat{Y}|S}$ in $Q_\lambda(\hat{y}|z, s)$: 
\begin{align*}
&I_{Q_\lambda}(Z;\hat{Y}|S) = \sum_{z,s,\hat{y}} Q_\lambda(z, s, \hat{y}) \log \left( \frac{Q_\lambda(\hat{y}|z, s)}{Q_\lambda(\hat{y}|s)} \right) \\
& \stackrel{(a)}{=} \sum_{z,s,\hat{y}} p(z, s) \left(\lambda Q_1(\hat{y}|z, s) +(1-\lambda)Q_2(\hat{y}|z, s)\right) \log \left( \frac{\lambda Q_1(\hat{y}|z, s) +(1-\lambda)Q_2(\hat{y}|z, s)}{\lambda Q_1(\hat{y}|s)+ (1-\lambda)Q_2(\hat{y}|s)} \right) \\
& \stackrel{(b)}{\leq} \lambda \sum_{z,s,\hat{y}} p(z, s) Q_1(\hat{y}|z, s) \log \left( \frac{Q_1(\hat{y}|z, s)}{Q_1(\hat{y}|s)} \right)+(1-\lambda) \sum_{z,s,\hat{y}} p(z, s) Q_2(\hat{y}|z, s) \log \left( \frac{Q_2(\hat{y}|z, s)}{Q_2(\hat{y}|s)} \right) \\
&= \lambda \mathrm{I}_{Q_1}(Z;\hat{Y}|S) + (1-\lambda)\mathrm{I}_{Q_2}(Z;\hat{Y}|S).
\end{align*}
The equality (\textit{a}) holds from linear combinations of $ Q_\lambda(\hat{y}|s) = \sum_z Q_\lambda(\hat{y}|z, s) p(z|s).$ The inequality (\textit{b}) holds due to the application of the log-sum inequality (see Lemma ~\ref{def:log_sum}).
\end{proof}

\section{Expanded Experimental Section }\label{apx:exp}
This section includes additional results, expanded tables, figures, and details that provide a more comprehensive understanding of our study. 

\noindent \textbf{Dataset.} We consider the following datasets: 

\emph{(1) Synthetic dataset:} A $2$-D feature vector \( X = (X_0, X_1) \) follows a distribution given by \( X|_{Y=1} \sim \mathcal{N}((2, 2), \left[\begin{smallmatrix} 5 & 1 \\ 1 & 5 \end{smallmatrix}\right]) \),  \( X|_{Y=0} \sim \mathcal{N}((-2, -2), \left[\begin{smallmatrix} 10 & 1 \\ 1 & 3 \end{smallmatrix}\right]) \). Assume $Z$ is a binary sensitive attribute such that  \( Z = 1 \) if \( X_0 > 0 \), else \( Z = 0 \), to encode dependence of $X_0$ with $Z$.

\textit{(2) Adult dataset}: The Adult dataset is a publicly available dataset in the UCI repository based on $1994$ U.S. census data ~\citep{Adult}. The goal is to predict whether an individual earns more or less than \textdollar{50,000} per year based on features such as occupation, marital status, and education. We select \emph{gender} as a sensitive attribute, with men as $Z{=}1$ and women as $Z{=}0$.

\textbf{Client Distribution.} We strategically partition our datasets across clients to examine scenarios characterized by Unique, Redundant, and Masked Disparities. 

\emph{Scenario 1: Uniform Distribution of Sensitive Attributes Across Clients.} 
The sensitive attribute \(Z\) is independently distributed across clients, i.e., \(Z \indep S\). We randomly distribute the data across clients.

\emph{Scenario 2: High Heterogeneity in Sensitive Attributes Across Clients.} 
We split to observe heterogeneity in the distribution of sensitive attributes across clients, i.e., \( Z = S \) with a probability \( \alpha \). For instance, when \( \alpha = 0.9 \), the client with \( S = 0 \) consists of $90\% $ women, while the client with \( S = 1 \) is composed of $90\%$ men. For the Adult dataset, we use \( \alpha = \Pr(Z=0|S=0) \) as a parameter to regulate this heterogeneity.

\emph{Scenario 3: High Synergy Level Across Clients.} 
The true label \(Y \approx Z \oplus S\). To emulate this scenario, we partition the data such that client \(S=0\) possesses data of males (\(Z=1\)) with true labels \(Y=1\) and females (\(Z=0\)) with true labels \(Y=0\). Conversely, client \(S=1\) contains the remaining data, i.e., males with \(Y=0\) and females with \(Y=1\). 

We introduce the \emph{synergy level} (Definition~\ref{def:synergylevel}) to measure alignment to \(Y = Z \oplus S\).

\begin{definition}[Synergy Level ($\lambda$)]\label{def:synergylevel} 
 The \textit{synergy level} \( \lambda \in [0,1] \) of a given dataset and client distribution is defined as the probability that the true label \( Y \) is aligned with \( Z \oplus S \),
\[\lambda = \Pr(Y= Z \oplus S),\]
  where \( \lambda = 1 \) implies perfect alignment between \( Y \) and \( Z \oplus S \), and \( \lambda = 0 \) implies zero alignment.
  \end{definition}
To set \( \lambda \) when splitting data across clients, we first split with perfect XOR alignment and then shuffle fractions of the dataset between clients.

\emph{Adult Heterogeneous Split.} In Fig.~\ref{fig:afgop} \textit{(fourth row)}, we split the Adult dataset to capture various disparities simultaneously. We set the synergy level $\lambda = 0.8$ (see Definition ~\ref{def:synergylevel}). Due to the nature of the Adult dataset, this introduces some correlation between the sensitive attribute $Z$ and client $S$.

\subsection{Experiment A:  Accuracy-Global-Local-Fairness Trade-off Pareto Front.}\label{apx:expA} To study the trade-offs between model accuracy and different fairness constraints, we plot the Pareto frontiers for the AGLFOP. We solve for maximum accuracy $(1-err)$ while varying global and local fairness relaxations $(\epsilon_g,\epsilon_l)$. We present results for synthetic and Adult datasets as well as PID terms for various data splitting scenarios across clients. The three-way trade-off among accuracy, global, and local fairness can be visualized as a contour plot (see Fig.~\ref{fig:afgop}).

For the Adult dataset, we restrict our optimization space \( \Delta_p \) to lie within the convex hull derived by the False Positive Rate (FPR) and True Positive Rate (TPR) of an initially trained classifier (trained using FedAvg). This characterizes the accuracy-fairness for all derived classifiers from the original trained classifier (motivated by the post-processing technique from \cite{hardt2016equality}). The convex hall characterizes the distributions that can be achieved with any derived classifier. The convex hull for each protected group is composed of points, including $(0,0)$, $(TPR, FPR)$, $(\overline{TPR},\overline{FPR})$ and $(1,1)$, where $\overline{TPR}$ and $\overline{FPR}$ denote the true positive and false positive rates of a predictor that inverts all predictions for a protected group. Future work could explore alternative constraints for various specialized applications.

\subsection{Experiment B: Demonstrating Disparities in Federated Learning Settings.}\label{apx:expB} In this experiment, we investigate the PID of disparities in the Adult dataset trained within a FL framework. We employ the \emph{FedAvg} algorithm~\citep{Fedavg} for training.

\textbf{Setup.} Our FL model employs a two-layer architecture with 32 hidden units per layer, using ReLU activation, binary cross-entropy loss, and the Adam optimizer. The server initializes the model weights and distributes them to clients, who train locally on partitioned datasets for 2 epochs with a batch size of 64. Client-trained weights are aggregated server-side via the FedAvg algorithm, and this process iterates until convergence. Evaluation metrics are estimated using the \verb|dit| package~\citep{dit}, which includes PID functions for decomposing Global and Local Disparities into Unique, Redundant, and Masked Disparity, following the definition from~\cite{bertschinger_QUI}.

We analyze the following scenarios: 

\textbf{PID of Disparity Across Various Splitting Scenarios.} We partition the dataset across two clients, each time varying the level of sensitive attribute heterogeneity ($\alpha = \Pr(Z=0|S=0)$). The Adult dataset exhibits a gender imbalance with a male-to-female ratio of \( 0.33 : 0.67 \). Consequently, \( \Pr(Z=0) = 0.33 \), making \( \alpha = \Pr(Z=0|S=0) = 0.33 \) the independently distributed case.


 In this first setup, we distribute the \textit{sensitive attribute uniformly across clients (splitting scenario $1$)} and employ FedAvg for training. The FL model achieves an accuracy of $84.45\%$ with a Global Disparity of $0.0359$ bits and a Local  Disparity of $0.0359$ bits. The PID reveals that the Unique Disparity is $0.0359$ bits, with both Redundant and Masked Disparities being negligible. This aligns with our centralized baseline, indicating that the disparity originates exclusively from the dependency between the model's predictions and the sensitive attributes, rather than being influenced by $S$.

 When the dataset is split to introduce \textit{high heterogeneity in sensitive attributes across clients (splitting scenario $2$}), the resulting FL model exhibits a Global Disparity of $0.0431$ bits and a Local  Disparity of $0.0014$ bits.  PID reveals a Redundant Disparity of $0.0431$ bits and a Masked Disparity of $0.0014$ bits, with no Unique Disparity.

Next we split and train according to \textit{splitting scenario $3$ ($\lambda = 0.9$)}. The trained model reports a Local  Disparity of $0.1761$ bits and a Global Disparity of $0.0317$ bits. The PID decomposition shows a Masked Disparity of $0.1761$ bits and a Redundant Disparity of $0.0317$ bits, with no Unique Disparity observed. The emergence of non-zero Redundant Disparity is attributable to the data splitting, which consequently leads to $\mut{Z}{S}=0.2409$ bits.

 We summarize the three scenarios in Fig.~\ref{fig:combined_main}. Additionally, we evaluate the effects of using a naive local disparity mitigation technique on the various disparities present.

\textbf{Effects of Naive Local Fairness Mitigation Technique.} We evaluate the effects of using a naive local  disparity mitigation technique on the various disparities present. This is achieved by incorporating a statistical parity regularizer to the loss function at each individual client:

\texttt{client\_loss = client\_cross\_entropy\_loss + $\beta$ client\_fairness\_loss}.

We use an implementation from FairTorch package \citep{fairtorch}. The term $\beta$ is a hyperparameter that trades off between accuracy and fairness. We use $\beta = 0.1$ to maintain similarly accurate models. The results are presented in Table~\ref{tbl:faireg}.

\begin{table}[ht]
\centering
\caption{Table illustrates the effects of using a naive local  disparity mitigation technique on the various scenarios. It proved efficacious only when Unique Disparity is present (\textit{scenario 1}). However, with high redundancy or synergy  (\textit{scenarios 2 \&3}), the utilization of the disparity mitigation technique exacerbated disparities.}
\label{tbl:faireg}
\small
\begin{tabular}{lccccc}
\toprule
           & Loc. & Glob. & Uniq.  & Red.   & Mas. \\
\midrule
Scenario 1 & 0.0359 & 0.0359 & 0.0359 & 0.0000 & 0.0000 \\
+ fairness & 0.0062 & 0.0062 & 0.0062 & 0.0000 & 0.0000 \\
\midrule
Scenario 2 & 0.0014 & 0.0431 & 0.0000 & 0.0431 & 0.0014 \\
+ fairness & 0.0110 & 0.0626 & 0.0000 & 0.0626 & 0.0110 \\
\midrule
Scenario 3 & 0.1761 & 0.0317 & 0.0000 & 0.0317 & 0.1761 \\
+ fairness & 0.0935 & 0.0418 & 0.0053 & 0.0365 & 0.0882 \\
\bottomrule
\end{tabular}
\end{table}

\begin{figure}[t]
\centering
\includegraphics[width=0.48\textwidth]{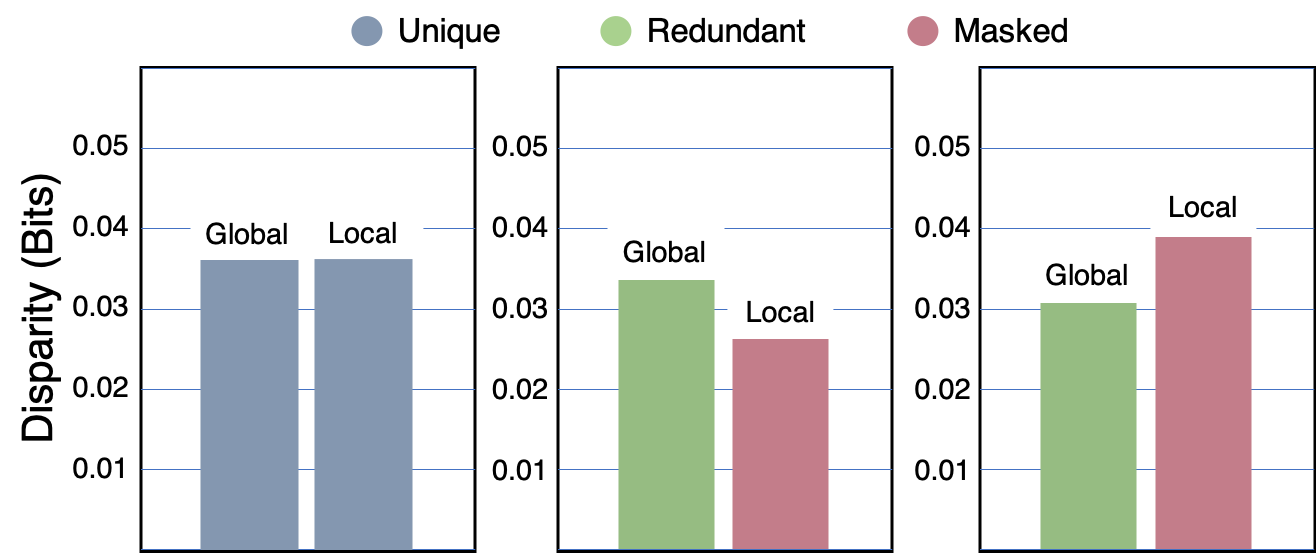}\caption{ Plot demonstrating scenarios with Unique, Redundant, and Masked Disparities for the Adult dataset 5 client case. Difficulty in splitting to achieve pure Redundant and Masked Disparity due to the proportion of labels in the dataset.}
 \label{fig:5clients}
\end{figure}

\textbf{PID of Disparity under Heterogeneous Sensitive Attribute Distribution.} We analyze the PID of Local and Global Disparities under different sensitive attribute distributions across clients. We train the model with two clients, each having equal-sized datasets. We use $\alpha = \Pr(Z=0|S=0)$ to represent sensitive attribute  heterogeneity. Note that for a fixed $\alpha$, the proportions of sensitive attributes at the other client are fixed. For example since $\Pr(Z=0)=0.33$ for the Adult dataset, $\alpha=0.33$ results in even distribution of sensitive attributes across the two clients. Our results are summarized in Fig.~\ref{fig:combined_main} and Table~\ref{apx:alpha}. We also provide results for $10$ federating clients in Table~\ref{tbl:10clients}.

\renewcommand{\arraystretch}{1.1}
\begin{table}[ht]
\centering
\caption{The PID of Global and Local  Disparity for varying sensitive attribute heterogeneity \( \alpha \)}
\label{apx:alpha}
\small
\begin{tabular}{ccccccccc}
\toprule
\( \alpha \) & \( \mut{Z}{S} \) & Local & Global & Unique & Redundant & Masked & \( \mut{\hat{Y}}{S} \) & Accuracy \\
\midrule
0.1  & 0.1877 & 0.0262 & 0.0342 & 0.0000 & 0.0342 & 0.0262 & 0.0080 & 86.54\% \\
0.2  & 0.0575 & 0.0336 & 0.0364 & 0.0064 & 0.0301 & 0.0273 & 0.0028 & 86.95\% \\
0.3  & 0.0032 & 0.0363 & 0.0365 & 0.0332 & 0.0032 & 0.0031 & 0.0002 & 86.86\% \\
0.33 & 0.0000 & 0.0340 & 0.0340 & 0.0340 & 0.0000 & 0.0000 & 0.0000 & 87.34\% \\
0.4  & 0.0154 & 0.0311 & 0.0319 & 0.0186 & 0.0133 & 0.0125 & 0.0009 & 86.70\% \\
0.5  & 0.0957 & 0.0368 & 0.0413 & 0.0023 & 0.0390 & 0.0345 & 0.0045 & 86.77\% \\
0.6  & 0.2613 & 0.0242 & 0.0346 & 0.0000 & 0.0346 & 0.0242 & 0.0104 & 86.61\% \\
0.66 & 0.4392 & 0.0185 & 0.0325 & 0.0000 & 0.0325 & 0.0185 & 0.0140 & 86.36\% \\
\bottomrule
\end{tabular}
\end{table}

\textbf{Observing Levels of Masked Disparity.} We aim to gain a deeper understanding of the circumstances with Masked Disparities. Through scenario 3, we showed how high Masked Disparities can occur. However, the level of synergy portrayed in the example may not always be present in practice.  We attempt to quantify this using a metric  \textit{synergy level}. The synergy level ($\lambda$) measures how closely the true labels $Y$ align with the XOR of $Z$ and $S$ (see Definition~\ref{def:synergylevel}). To achieve a high synergy level, we apply the method outlined in Scenario 3. To decrease $\lambda$, we randomly shuffle data points between clients until the synergy level reaches 0.  We conduct experiments with varying levels of synergy to observe the impact on the Masked Disparity.  The results are summarized in Fig.~\ref{fig:combined_main} and Table~\ref{apx:syn}.

\setlength{\tabcolsep}{4pt}
\renewcommand{\arraystretch}{1.1}
\begin{table*}[ht]
\centering
\caption{PID of Global and Local  Disparity under varying synergy levels \( \lambda \)}
\label{apx:syn}
\resizebox{\textwidth}{!}{
\small
\begin{tabular}{ccccccccccc}
\toprule
\( \lambda \) & \( I(Z;S) \) & Loc. & Glob. & Uniq. & Red. & Mas. & \( \mut{\hat{Y}}{S} \) & Acc. & \( \mut{Z}{\hat{Y}|S=0} \) & \( \mut{Z}{\hat{Y}|S=1} \) \\
\midrule
0     & 0.0035 & 0.0402 & 0.0373 & 0.0338 & 0.0035 & 0.0063 & 0.0005 & 85.12\% & 0.0196 & 0.0608 \\
0.25  & 0.0113 & 0.0486 & 0.0419 & 0.0308 & 0.0111 & 0.0178 & 0.0009 & 85.54\% & 0.0819 & 0.0152 \\
0.5   & 0.0299 & 0.0536 & 0.0335 & 0.0127 & 0.0208 & 0.0410 & 0.0033 & 85.24\% & 0.1056 & 0.0017 \\
0.75  & 0.0846 & 0.0932 & 0.0366 & 0.0023 & 0.0343 & 0.0909 & 0.0068 & 85.26\% & 0.0024 & 0.1840 \\
1     & 0.2409 & 0.1644 & 0.0149 & 0.0000 & 0.0150 & 0.1644 & 0.0201 & 84.30\% & 0.0839 & 0.2450 \\
\bottomrule
\end{tabular}
}  %
\end{table*}

\textbf{Multiple Client Case.} We examine scenarios involving multiple clients. Observations are similar to the two-client case previously studied. To observe a high Unique Disparity,  sensitive attributes  need to be identically distributed across clients. To observe the Redundant Disparity, there must be some dependency between clients and a specific sensitive attribute, meaning certain demographic groups  are known to belong to a specific client. The Masked Disparity can be observed when there is a high level of synergy or XOR behavior between variables $Z$ and $S$. Note that since S is no longer binary, we can convert its decimal value to binary and then take the XOR.

\begin{figure}[h]

\centering
\includegraphics[width=0.45\textwidth]{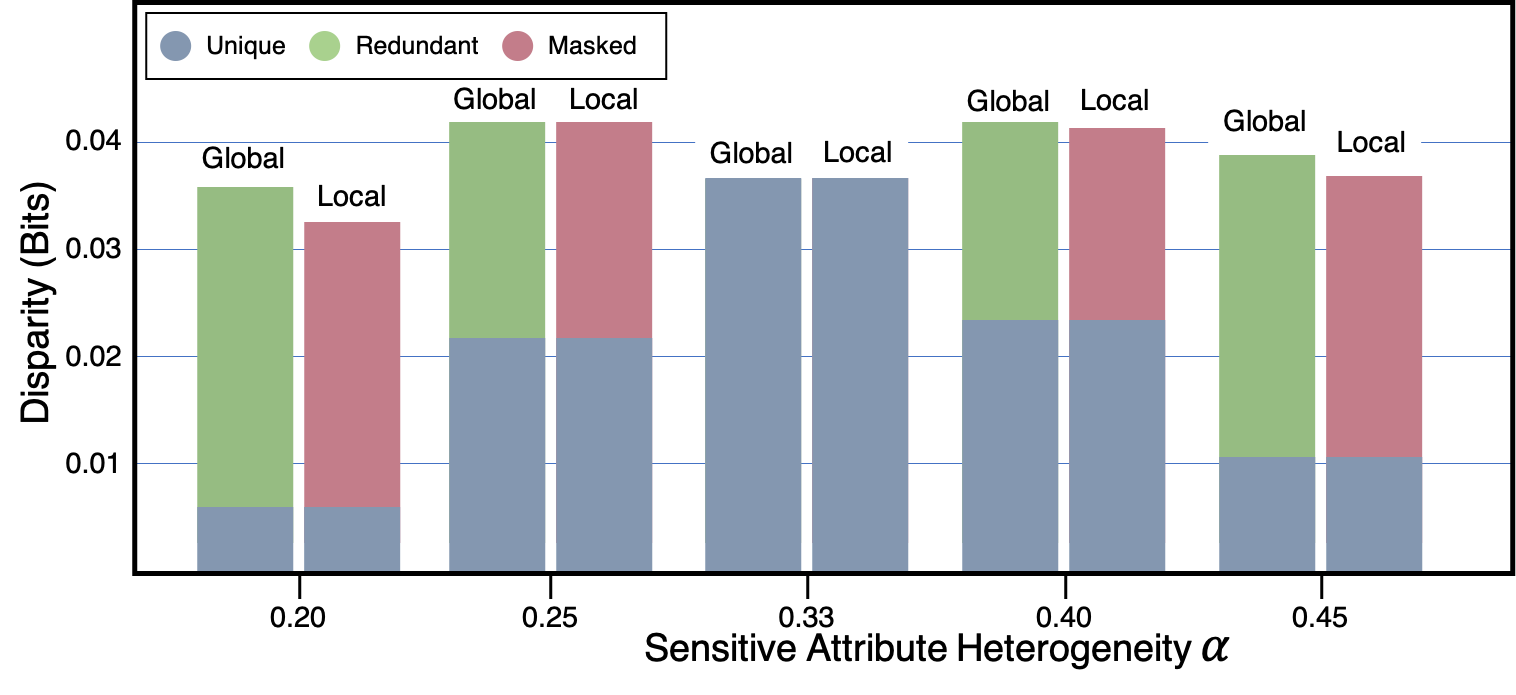}\caption{Plot showing the PID of disparities when the data is near \iid{} among $K=10$ clients. All types of disparities can be observed. The value $\alpha = 0.33$ represents the case where the data is \iid{} and only Unique Disparity is observed.}
\label{fig:10clients}
\end{figure}

We experiment with $K=5$ clients and examine the three disparities. To observe the Unique Disparity by randomly distributing the data among clients. For Redundant Disparity, we divide the data such that the first two clients are mostly females and the remaining three clients are mostly males. For Masked Disparity, we distribute the data similarly to scenario 3 (see Fig.~\ref{fig:5clients} and Fig.~\ref{fig:10clients}).

\textbf{{Additional Insights from Experiments.}} When data is uniformly distributed across clients, Unique Disparity is dominant and contributes to both global and local unfairness (see Fig.~\ref{fig:combined_main}  \textit{Scenario 1}: model trained using FedAvg on the adult dataset and distributed uniformly across clients). In the trade-off Pareto Front (see Fig.~\ref{fig:afgop},\textit{ row 1}), we see that both local and global fairness constraints have balanced tradeoffs with accuracy. The PID decomposition (Fig.~\ref{fig:combined_main}, \textit{row 1}, \textit{column 2,3,4}) explains this as we see the disparity is mainly Unique Disparity, with little Redundant or Masked Disparity. The Unique Disparity highlights where Local and Global Disparity agree. 

In the case with sensitive attribute heterogeneity (sensitive attribute imbalance across clients). We observe mainly Redundant Disparity (see Fig.~\ref{fig:combined_main}, \textit{scenario 2} and \textit{middle}), this is a globally unfair but locally fair model (recall Proposition~\ref{prop:decomposition}). Observe in the tradeoff plot (see Fig.~\ref{fig:afgop}, \textit{row 2}) that the accuracy trade-off is with mainly global fairness (an accurate model could have zero Local  Disparity but be globally unfair). 

In the cases with sensitive-attribute synergy across clients. For example, in a two-client case (one client is more likely to have qualified women and unqualified men and vice versa at the other client). We observe that the Mask Disparity is dominant (see Fig.~\ref{fig:combined_main}, \textit{Scenario 3}). The trade AGLFOP tradeoff plot (see Fig.~\ref{fig:afgop}, \textit{row 3}) is characterized by Masked Disparity with trade-offs mainly between local fairness and accuracy (an accurate model could have zero Global Disparity but be locally unfair). The
Redundant and Masked Disparity highlights where Local and Global Disparity disagree.

The AGLFOP provides the theoretical boundaries trade-offs, capturing the optimal performance any model or FL technique can achieve for a specified dataset and client distribution. For example, say one wants a perfectly globally fair and locally fair model, i.e., ($\epsilon_g=0$, $\epsilon_l=0$). Under high sensitive attribute heterogeneity (see Fig.~\ref{fig:afgop}, \textit{row 2}), they cannot have a model that does better than $64\%$.

\begin{table}[t]
\caption{PID of Global \& Local  Disparity for various sensitive attribute distributions across 10 clients.}
\centering
\label{tbl:10clients}
\begin{tabular}{ccccccc}
\toprule
\( \alpha \) & Unique  & Redundant & Masked & Global & Local  & Accuracy \\
\midrule
0.25  & 0.0219 & 0.0190 & 0.0178 & 0.0409 & 0.0409 & 84.85\% \\
0.33  & 0.0376 & 0.0000 & 0.0000 & 0.0376 & 0.0376 & 85.58\% \\
0.4   & 0.0268 & 0.0141 & 0.0137 & 0.0410 & 0.0405 & 84.85\% \\
0.45  & 0.0107 & 0.0289 & 0.0270 & 0.0390 & 0.0377 & 84.85\% \\
\bottomrule
\end{tabular}
\end{table}

\end{document}